\def\@footnotecolor{blue}
\patchcmd{\@footnotemark}{\hyper@linkstart{link}}{\hyper@linkstart{footnote}}{}{}
\theoremstyle{plain}
\newtheorem{thm}{Theorem}[section]    
\newtheorem{prop}[thm]{Proposition}   
\newtheorem{defn}{Definition}
\pgfplotsset{compat=1.18}
\title{Bridging Econometrics and AI: VaR Estimation via  Reinforcement Learning and GARCH Models}
\author[1]{Fredy POKOU \thanks{\texttt{fredypokou@gmx.fr}}}
\author[2]{Jules SADEFO KAMDEM \thanks{\texttt{jules.sadefo-kamdem@umontpellier.fr}}}
\author[3]{François BENHMAD \thanks{\texttt{francois.benhmad@umontpellier.fr}}}
\affil[1]{Inria, CNRS, Univ. of Lille, Centrale Lille, UMR 9189 - CRIStAL, F-59000 Lille, France}
\affil[2,3]{MRE UR 209 and Faculty of Economics. Montpellier University, France}
\providecommand{\keywords}[1]{\textbf{\textit{Keywords:}} #1}
\begin{document}

\maketitle
%
%

\begin{abstract}
Estimating market risk in volatile environments remains a major challenge, as traditional GARCH-type models often struggle to capture nonlinear dynamics. This paper proposes a hybrid Value-at-Risk (VaR) framework that integrates GARCH volatility forecasts with a Double Deep Q-Network (DDQN) reinforcement learning classifier. By reframing VaR estimation as a classification problem, the model adaptively adjusts risk thresholds based on predicted low- and high-risk return regimes.
Using more than 16 years of Euro Stoxx 50 data, the framework achieves 79.4\% test accuracy and substantially reduces both the frequency and the temporal clustering of VaR violations. Backtesting confirms compliance with the Kupiec and Christoffersen tests, while Extreme Value Theory supports its ability to model tail risk. The resulting approach offers a statistically robust, capital-efficient, and regulatory-aligned solution for proactive financial risk management.
\end{abstract}

\keywords{Deep reinforcement learning, Directional prediction, Imbalanced class problem, Value-at-Risk.}

\section{Introduction}
\label{sec1}
\paragraph{Context:}
Forecasting stock returns is a long-standing challenge in financial economics, with significant implications for both risk management and regulatory compliance. Traditional econometric models such as GARCH \citep{bollerslev1986generalized} capture volatility persistence but fail to fully account for key stylized facts of financial time series: fat tails, volatility clustering, and leverage effects \citep{glosten1993relation}. Similarly, modern machine learning and deep learning methods, although capable of modeling nonlinear dynamics \citep{goodfellow2016deep,tealab2018time}, tend to underperform during rare but impactful market shocks \citep{fawcett1997adaptive,pokou2022contribution}. As illustrated in Figure \ref{fig1}, these limitations often result in systematic mispredictions of excess returns, especially in turbulent markets.
These forecasting inaccuracies are critical because they directly translate into unreliable estimates of Value-at-Risk (VaR), the benchmark risk measure under Basel regulatory frameworks \citep{basel2017basel}. Overestimation inflates capital requirements, whereas underestimation exposes institutions to excessive losses. To mitigate these shortcomings, the recent literature has shifted from precise return forecasting to directional return prediction, reframe the task as a classification problem, determining whether returns will be positive or negative \citep{kanas2001neural,nyberg2011forecasting,alostad2017directional}. Beyond the standard zero threshold, quantile and volatility-based criteria have been introduced to better isolate significant market movements \citep{chung2007model,linton2007quantilogram}. 
\newpage
\noindent Building on this line of research, we follow \cite{nevasalmi2020forecasting} in adopting a finer classification that filters noise and emphasizes extreme outcomes. Although this approach improves the alignment between forecasts and risk exposure, it introduces a severe class imbalance problem, a largely unexplored challenge in the financial risk literature.
\begin{figure}[H]
\begin{subfigure}{.5\textwidth}
  \centering
  \includegraphics[width=.95\linewidth]{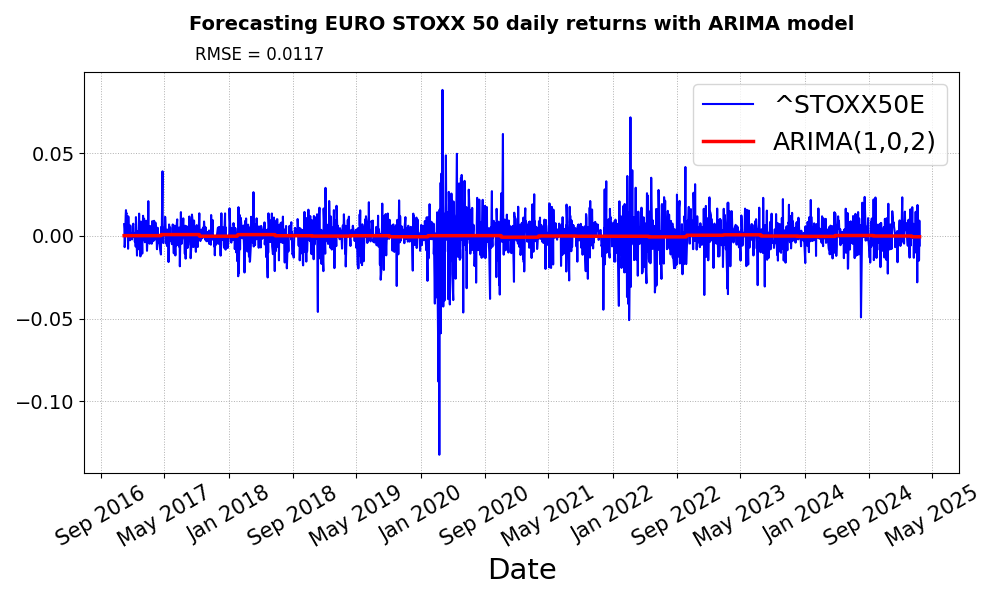}  
  \caption{ARIMA}
  \label{fig1_1}
\end{subfigure}
\begin{subfigure}{.5\textwidth}
  \centering
  \includegraphics[width=.95\linewidth]{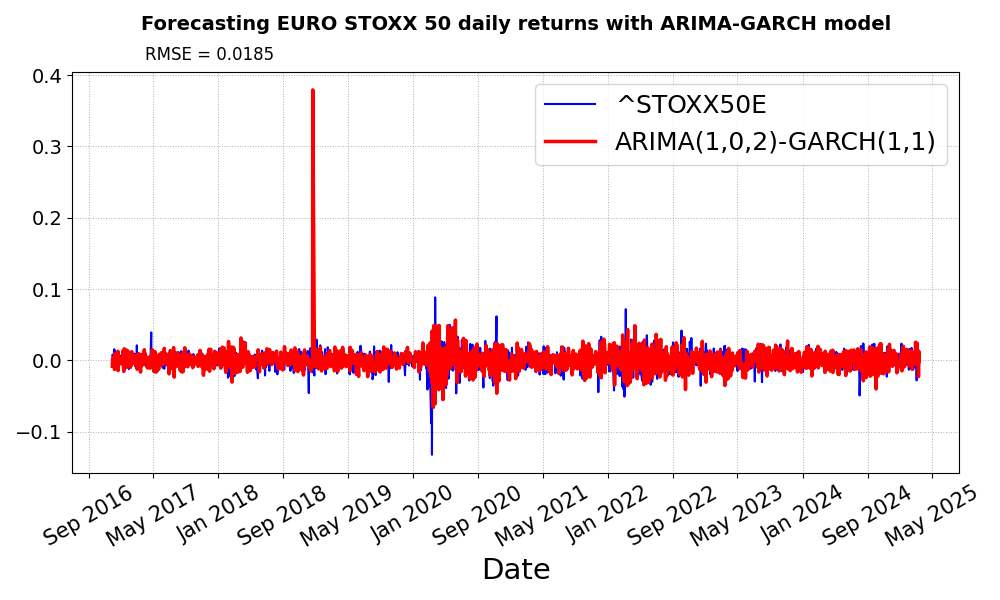}  
  \caption{GARCH}
  \label{fig1_2}
\end{subfigure}
\vfill
\begin{subfigure}{.5\textwidth}
  \centering
  \includegraphics[width=.95\linewidth]{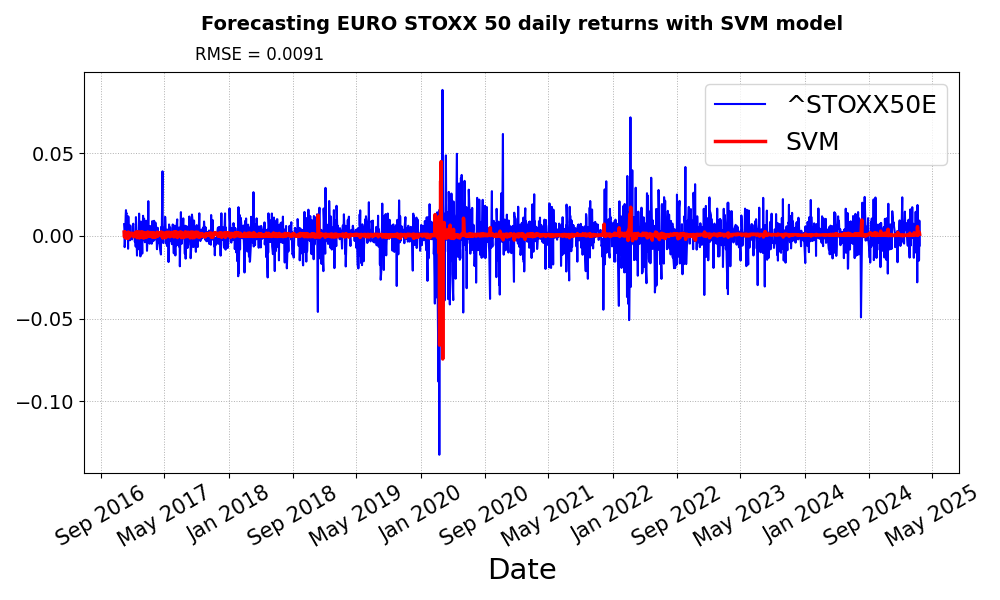}  
  \caption{SVM}
  \label{fig1_3}
\end{subfigure}
\begin{subfigure}{.5\textwidth}
  \centering
  \includegraphics[width=.95\linewidth]{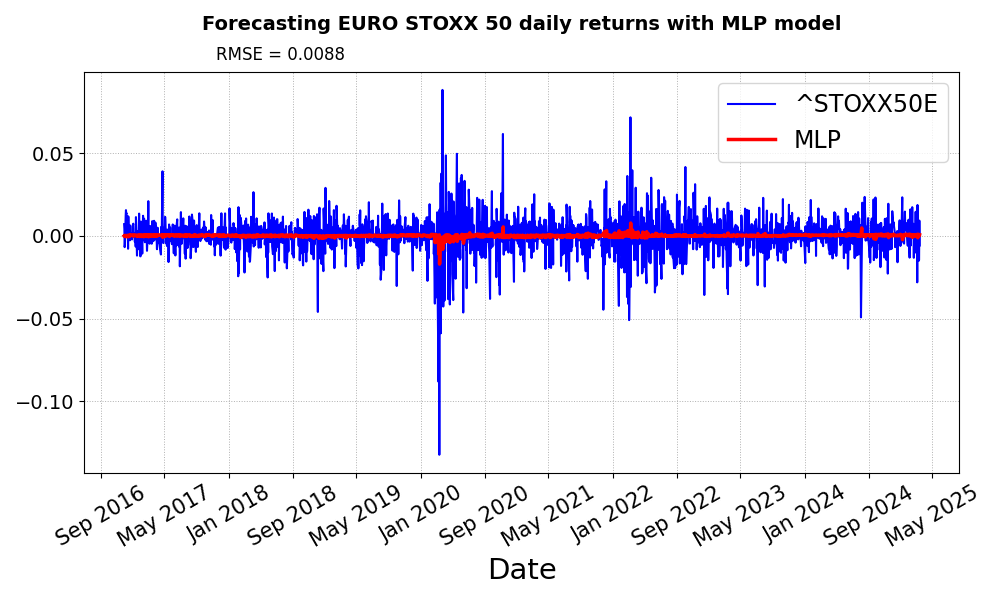}  
  \caption{MLP}
  \label{fig1_4}
\end{subfigure}
\caption{Euro Stoxx 50 return forecasts using different models}
\label{fig1}
\end{figure}

\paragraph{Problem Statement:}
The central challenge of this study is the reliable estimation of the Value-at-Risk (VaR) when the return prediction is reformulated as a directional prediction task under volatile market conditions. In practice, extreme negative returns the outcomes most critical for risk management and regulatory compliance occur far less frequently than small or positive returns. This structural asymmetry, known as the imbalanced class problem, undermines the predictive capacity of both econometric models and supervised machine learning techniques, often leading to systematic underestimation of VaR and, consequently, potential regulatory breaches.
To address this imbalance, researchers have developed various data rebalancing strategies. Among oversampling methods, the widely used Synthetic Minority Oversampling Technique (SMOTE) \citep{chawla2002smote} creates synthetic minority observations to enhance classifier sensitivity, while Borderline-SMOTE \citep{han2005borderline} focuses on generating synthetic data near decision boundaries. On the undersampling side, Tomek Links \citep{tomek1976two,elhassan2016classification} remove borderline majority instances, whereas Edited Nearest Neighbors (ENN) \citep{wilson2007asymptotic,elhassan2016classification} eliminate misclassified majority samples to improve class separability. More recent hybrid techniques combine oversampling and undersampling, for example, SMOTE with Tomek Links to improve balance while reducing noise \citep{zeng2016effective,pereira2020mltl,tang2008svms}. Furthermore, cost-sensitive learning penalizes errors in minority outcomes more heavily, encouraging models to better capture rare but impactful returns \citep{tang2008svms}.
Despite their contributions, these methods often introduce bias or overfitting, particularly in financial contexts where return distributions evolve rapidly and exhibit strong nonlinear dynamics. As a result, conventional rebalancing remains insufficient for adaptive risk forecasting. Against this backdrop, Deep Reinforcement Learning (DRL) offers a promising alternative. Unlike static rebalancing techniques, DRL learns an adaptive policy that continuously adjusts to market conditions while explicitly addressing class imbalance \citep{mnih2013playing,van2016deep,yang2020deep}. 
\newpage
\noindent
By reformulating stock return forecasting, which was previously a regression problem, into a directional forecasting framework in a context of class imbalance, our approach aims to provide more robust and regulatory-relevant VaR estimates.
\paragraph{Main Contributions:}
This paper advances the literature on financial risk modeling by proposing a novel hybrid framework for Value-at-Risk (VaR) estimation that explicitly addresses the challenges of volatility dynamics and class imbalance in directional prediction.\\

\noindent First, based on the recent work of \cite{nevasalmi2020forecasting}, we reformulate return prediction as a directional prediction problem, adopting refined thresholds that better isolate extreme returns and filter out noise. This formulation improves the model’s sensitivity to rare but economically significant market movements, a dimension often neglected by conventional approaches.
Second, we develop a hybrid econometric AI methodology that combines the volatility modeling strengths of the GARCH family with the adaptive capabilities of Deep Reinforcement Learning (DRL). Unlike static machine learning classifiers, DRL enables dynamic policy adjustment, thus directly addressing the imbalanced class problem and improving robustness in turbulent market environments.
Third, we highlight the regulatory and economic relevance of our approach. By reducing both the frequency of VaR violations and the associated capital requirements, the framework provides a risk management tool that is better aligned with the prudential standards of Basel III, mitigating the twin risks of overcapitalization and underestimation of losses.
Finally, we present a rigorous empirical validation using daily Eurostoxx 50 data that cover periods of increased volatility and financial crisis. The results demonstrate that our approach outperforms traditional econometric and supervised learning models in terms of accuracy, stability, and robustness, offering a more reliable foundation for proactive risk management.     

\paragraph{Reviewed Literature: }
Value-at-risk (VaR) estimation has long relied on econometric approaches such as the GARCH family \citep{bollerslev1986generalized, glosten1993relation}, which, despite their analytical rigor, often fail to capture heavy tails, volatility clustering, and leverage effects. To address these limitations, machine learning methods, including neural networks and ensemble models, have been applied to financial time series \citep{zhang2003time,tealab2018time,pokou2024hybridization}. Although such models improve non-linear predictive capacity, they remain sensitive to class imbalance and tend to underestimate rare but impactful shocks.\\

In recent years, risk-sensitive reinforcement learning (RL) has attracted considerable attention as a means of embedding robustness into decision making under uncertainty. \cite{morimura2012parametric} introduced parametric return density estimation through a risk-sensitive Bellman framework, while \cite{ying2022towards} proposed a CVaR-constrained policy optimization algorithm that ensures safety by explicitly controlling downside risk. Extending this line, \cite{noorani2025risk} formulated a risk-sensitive actor-critical method using exponential criteria to improve robustness and sample efficiency. Similarly, \cite{zhang2024cvar} developed CVaR-Constrained Policy Optimization (CVaR-CPO) to ensure safety in reinforcement learning by focusing on the upper tail of cost distributions. In the context of asset allocation, \cite{cui2023portfolio} employed a CVaR-based deep reinforcement learning approach for cryptocurrency portfolios, demonstrating the superiority of DRL in environments with large tail risks.\\

From a distributional perspective, \cite{stanko2019risk} advanced CVaR Q-learning, reducing computational complexity and highlighting the relevance of distributional RL for risk-averse policy design.
Although these contributions significantly advance risk-sensitive RL and CVaR-based methodologies, they primarily focus on direct quantile estimation, risk-sensitive optimization, or portfolio allocation problems. In contrast, our study introduces a fundamentally different perspective by reformulating the VaR estimation as a directional forecasting problem. Inspired by \cite{nevasalmi2020forecasting}, we adopt refined thresholds to isolate extreme returns and explicitly address the resulting class imbalance through Deep Reinforcement Learning. Combined with GARCH volatility modeling, our hybrid framework offers a more adaptive and regulatory-relevant approach, combining econometric rigor with the flexibility of DRL in a way that has not yet been explored in the literature.
\newpage
\paragraph{Paper Organization:}
The remainder of the paper is structured as follows. Section \ref{sec2} introduces the methodological framework, including the GARCH modeling and the DRL architecture for adaptive VaR estimation. Sections \ref{sec3} and \ref{sec4} present the empirical validation of Eurostoxx 50 daily data, discussing robustness checks and comparative analysis. Section \ref{sec5} concludes with the main findings, policy implications, and avenues for future research.
\section{Methodology}
\label{sec2}
Let $P_t$ be the closing price of an asset at time $t$, so the logarithmic return at time $t$ over a unit time interval is defined as $r_t=\log(P_t/P_{t-1})$.
\subsection{Value-at-Risk}
\label{sec_2_1}
The Value-at-Risk (VaR) at confidence level $\alpha$ is defined as the $\alpha$-quantile of the return distribution and represents the loss that will be exceeded in $\alpha \times 100\%$ of cases over the next period. Formally,
\begin{equation}
P\big(r_{t+1} < -VaR_{t+1}^{\alpha}\big) = \alpha,
\label{e1}
\end{equation}
where $\alpha \in [0,1]$ is typically set to 0.05 or 0.01 in risk management applications. Denoting by $F_{r_{t+1}}(\cdot)$ the cumulative distribution function (CDF) of returns, the VaR can equivalently be expressed as
\begin{equation}
VaR_{t+1}(\alpha) = -F^{-1}_{r_{t+1}}(\alpha),
\label{e2}
\end{equation}
where $F^{-1}_{r_{t+1}}(\alpha)$ is the inverse CDF of a continuous probability distribution, or quantile function, of $r_{t+1}$.\\

Although nonparametric estimation of $F^{-1}_{r_{t+1}}$ is possible, parametric VaR remains widely used in practice due to its simplicity and its ability to incorporate stylized facts of financial returns, such as heavy tails, volatility clustering, and asymmetry \citep{glosten1993relation}. A common starting point is the conditional location scale representation.
\begin{equation}
r_{t+1} = \mu_{t +1} + \sigma_{t+1} z_{t+1},
\label{e3}
\end{equation}
where $\mu_{t+1}$ and $\sigma_{t+1}$ are the conditional mean and volatility, and $z_{t+1}$ is an i.i.d. innovation with zero mean and unit variance. Under this specification, the one-step-ahead VaR at level $\alpha$ takes the unified form
\begin{equation}
VaR_{t+1}(\alpha) = -\mu_{t+1} + \sigma_{t+1} F^{-1}_{z_{t+1}}(\alpha),
\label{e4}
\end{equation}
where $F^{-1}_{z_{t+1}}(\alpha)$ denotes the $\alpha$-quantile of the standardized innovation distribution. For innovations in standard normal distribution, $F^{-1}_{r_{t+1}}(\alpha) = \Phi^{-1}(\alpha)$, while for heavy-tailed data, a standard Student $t$ distribution with $\nu$ degrees of freedom is often preferred, resulting in $F^{-1}_{z_{t+1}}(\alpha) = t_{\nu}^{-1}(\alpha)$.\\

In practice, the conditional mean $\mu_{t+1}$ of daily returns is close to zero and is commonly modeled as a constant. However, conditional volatility $\sigma_{t+1}$ is dynamic and is typically estimated using generalized autoregressive conditional heteroskedasticity models, such as GARCH(p,q) \citep{bollerslev1986generalized} or its asymmetric extensions, including GJR-GARCH \citep{glosten1993relation}. These models capture the persistence and clustering of volatility, while the choice of distribution for $z_t$ accounts for the fat tails frequently observed in financial data.\\

\subsection{GARCH models}
\label{sec_2_2}
Introduced by \cite{bollerslev1986generalized}, the GARCH(p,q) model can be defined by assuming that $\{r_t\}_{t \in \mathbb{N}}$ is a stochastic process decomposable in terms of expected returns and residuals, as follows:
\begin{equation}
r_t= \mu_t + \epsilon_t
\nonumber
\end{equation}
where
\begin{equation}
\begin{cases}
\epsilon_t = \sigma_t z_t, \\[6pt]
\sigma_t^2 = \alpha_0 + \sum_{i=1}^p \alpha_i \epsilon_{t-i}^2 + \sum_{j=1}^q \beta_j \sigma_{t-j}^2,
\end{cases}
\label{e6}
\end{equation}
where $z_t$ is an innovation standardized by i.i.d. with zero mean and unit variance. The positivity of the conditional variance is ensured by $\alpha_0 > 0$, $\alpha_i \geq 0$, and $\beta_j \geq 0$, while weak stationarity requires $\sum_{i=1}^p \alpha_i + \sum_{j=1}^q \beta_j < 1$. \\

The GARCH (1,1) model is the simplest and most commonly used univariate model and can be defined as follows:
\begin{equation}
\left\{
    \begin{array}{ll}
        \epsilon_t  &= \sigma_t z_t  \\
        \sigma_t^2 &= \alpha_0 + \alpha_1 \epsilon_{t-1}^2 + \beta_1 \sigma_{t-1}^{2}
    \end{array}
\right.
\label{e7}
\end{equation}
where the variance process $\sigma_t^2$ in GARCH(1,1) is stationary when $\alpha_1+\beta_1<1$.\\
The convergence rate depends on $\alpha_1+\beta_1$. 
As persistence is closer to 1, it takes longer for a shock to be forgotten by the market.\\

\citet*{glosten1993relation} proposed the GJR-GARCH(p,q) model to overcome GARCH's inability to differentiate between negative and positive shocks and to take leverage into account. 
It can be defined as follows:
\begin{equation}
\left\{
    \begin{array}{ll}
        \epsilon_t  &= \sigma_t z_t  \\
        \sigma_t^2 &= \alpha_0 +\sum_{i=1}^{p} (\alpha_i + \gamma_i \mathds{1}_{\epsilon_{t-i}< 0}) \times \epsilon_{t-i}^2 + \sum_{j=1}^{q}\beta_j \sigma_{t-j}^{2}
    \end{array}
\right.
\label{e8}
\end{equation}
In particular, GJR-GARCH(1,1) is expressed as follows:
\begin{equation}
\left\{
    \begin{array}{ll}
        \epsilon_t  &= \sigma_t z_t  \\
        \sigma_t^2 &= \alpha_0 + (\alpha_1 + \gamma \mathds{1}_{\epsilon_{t-1}< 0}) \times \epsilon_{t-1}^2 + \beta_1 \sigma_{t-1}^{2}
    \end{array}
\right.
\label{e9}
\end{equation}
where $z_t$ is an innovation standardized by i.i.d. and $\mathds{1}_{\{\epsilon_{t-1}<0\}}$ is an indicator function that equals one if the lagged innovation is negative (“bad news”) and zero otherwise. This formulation allows negative shocks to exert a disproportionately larger effect on conditional variance when $\gamma>0$, thus modeling the well-documented leverage effect. The possibility of conditional variance is guaranteed when $\alpha_0>0$, $\alpha_1\geq0$, $\beta_1\geq0$, and $\alpha_1+\gamma\geq0$. The stationarity of the variance process is further ensured under the condition $\alpha_1 + \beta_1 + \tfrac{1}{2}\gamma < 1$. This specification thus provides a parsimonious yet effective framework for accommodating asymmetries in volatility dynamics, a key feature of financial return series.\\

Although there are numerous extensions of the GARCH framework, such as EGARCH, TGARCH, and APARCH, our analysis focuses on the standard GARCH and its asymmetric variant, GJR-GARCH. These models capture the essential stylized facts of financial returns, namely volatility clustering, persistence, heavy tails, and leverage effects. Furthermore, restricting attention to GARCH and GJR-GARCH ensures parsimony and avoids the risk of overparameterization, which is particularly relevant in high-frequency estimation and reinforcement learning contexts. This balance between explanatory power and tractability makes them a robust baseline for integrating econometric volatility modeling with adaptive reinforcement learning techniques.

\subsection{Threshold selection for directional prediction}
\label{sec_2_3}
In the context of directional forecasting, the choice of threshold $c$ is critical for transforming continuous returns into binary outcomes. Although simple strategies often adopt $c=0$ to distinguish between positive and negative returns \citep{kanas2001neural, nyberg2011forecasting}, our objective to improve Value-at-Risk (VaR) estimation requires a more risk-sensitive criterion. Specifically, we define $c$ as the most severe threshold associated with observed VaR violations over a given horizon $H$. Formally, let $k$ index the time periods within the horizon. 
\newpage
Then,
\begin{equation}
c = \max \left\{\, r_{k+1}\;\big|\; r_{k+1}<\mathrm{VaR}_{k+1}(\alpha)\,, \; k=1,\ldots,H \right\}.
\label{e9a}
\end{equation}
Since $\text{VaR}_{k+1}(\alpha)$ is typically negative, this definition ensures that $c$ corresponds to the least negative return among the set of violations, i.e. the'mildest' loss still affecting the VaR limit. The binary outcome variable is then constructed as
\begin{equation}
y_t(c) = \mathds{1}_{\{r_t \leq c\}},
\label{e9b}
\end{equation}
which flags return less than or equal to $c$ as high-risk outcomes.\\ This formulation explicitly links the classification threshold to the realized downside risk, aligning the directional prediction with the practical objective of reducing VaR violations.

\begin{figure}[H]
\begin{subfigure}{.5\textwidth}
  \centering
  \includegraphics[width=.95\linewidth]{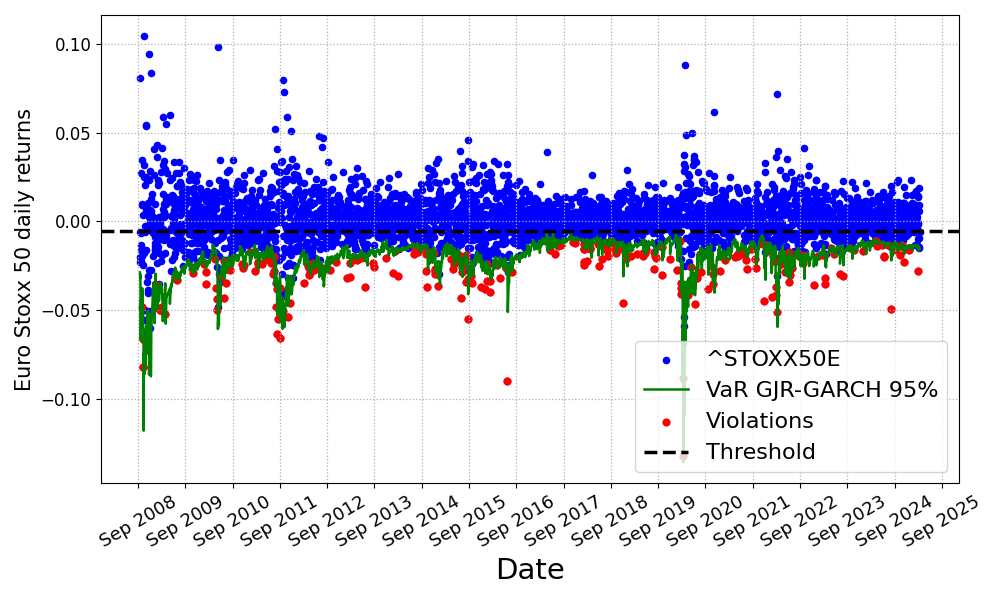}  
  \caption{Threshold selection}
  \label{fig2_a}
\end{subfigure}
\begin{subfigure}{.5\textwidth}
  \centering
  \includegraphics[width=.95\linewidth]{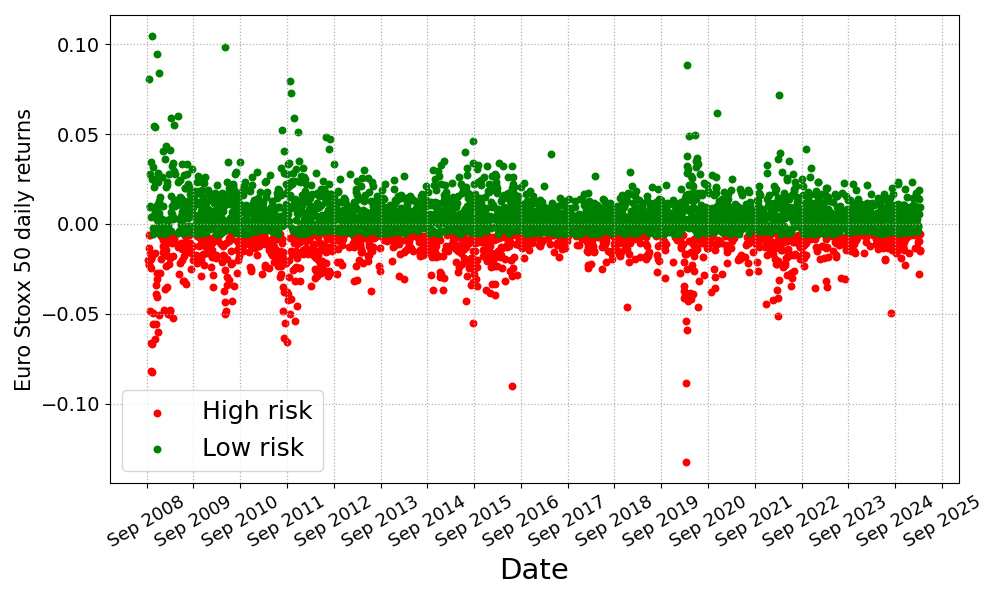}  
  \caption{The two return classes}
  \label{fig2_b}
\end{subfigure}
\caption{Illustration characterization of threshold and return classes}
\label{fig2}
\end{figure}
Figure \ref{fig2} illustrates our threshold-based approach for the directional prediction of returns. The panel \ref{fig2_a} depicts the dynamic selection of the risk threshold $c$, defined as the least severe loss among recent VaR violations, ensuring that the classification criterion is directly anchored in the realized downside risk. This adaptive procedure allows the threshold to evolve with market conditions, reflecting the gravity of observed losses. The panel \ref{fig2_b} translates this threshold into a binary classification of returns: outcomes at or below $c$ are labeled high-risk ($y_t(c)=1$), while all others are labeled low-risk ($y_t(c)=0$). These panels demonstrate how linking threshold selection to actual VaR breaches provides a more sensitive and operationally relevant framework to distinguish between high- and low-risk periods, thus enhancing the predictive capacity of risk management models.

\subsection{Proposed VaR model}
\label{sec_2_4}

Let $\mathrm{VaR}_{t+1}(\alpha)$ denotes the advance value at risk in one step at the confidence level $\alpha$, estimated by a GARCH or GJR-GARCH process. We introduce a classification model that, at each time $t+1$, assigns the portfolio to one of two states:
\begin{itemize}
\item Low risk: corresponding to a lower level of downside exposure;
\item High risk: corresponding to a heightened level of downside exposure.
\end{itemize}
We then define an adjusted risk measure as follows.

\begin{defn}[Classification-Adjusted Value at Risk]
The Classification-Adjusted Value-at-Risk denoted $\mathrm{VaR}_{ML}(\alpha)$, is given by

\begin{equation}
\small
\mathrm{VaR}_{\text{ML}}(\alpha) =
\begin{cases}
(1 - b_1) \cdot \mathrm{VaR}_{t+1}(\alpha), & \text{if } \hat{r}_{t+1} = \text{“Low risk”}, \\[6pt]
(1 + b_2) \cdot \mathrm{VaR}_{t+1}(\alpha), & \text{if } \hat{r}_{t+1} = \text{“High risk”},
\end{cases}
\label{e11_bis}
\end{equation}
where $b_1,b_2 \in \mathbb{R}^+$ are calibration parameters controlling the degree of adjustment, and $\hat{r}_{t+1}$ denotes the predicted risk category.
\end{defn}
\newpage
\begin{prop}
The Classification-Adjusted Value-at-Risk, $\mathrm{VaR}_{ML}(\alpha)$, preserves the fundamental properties of the original Value-at-Risk.
\end{prop}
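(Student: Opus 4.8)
I read the statement as asserting that $\mathrm{VaR}_{\mathrm{ML}}(\alpha)$ still obeys the axioms usually attributed to Value-at-Risk — non-negativity as a loss level, monotonicity in the confidence level $\alpha$, positive homogeneity with respect to the size of the position, and (cash) translation invariance — together with its quantile representation via \eqref{e4}. The structural fact that drives everything is that, conditionally on the predicted regime $\hat r_{t+1}$, the map in \eqref{e11_bis} is merely multiplication of $\mathrm{VaR}_{t+1}(\alpha)$ by a single deterministic constant $\kappa\in\{\,1-b_1,\;1+b_2\,\}$, and that, once the calibration is restricted to $b_1\in[0,1)$ and $b_2\ge 0$, this constant is strictly positive in both branches.

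\textbf{Main steps.} First I would fix the classifier output $\hat r_{t+1}$, reducing the claim to: a strictly positive rescaling $\kappa\cdot\mathrm{VaR}_{t+1}(\alpha)$ inherits each property of $\mathrm{VaR}_{t+1}(\alpha)$. Then I would check the properties in turn. (i) Non-negativity: at the usual levels $\alpha\le 1/2$ one has $F^{-1}_{z_{t+1}}(\alpha)\le 0$ with $\mu_{t+1}\approx 0$, so $\mathrm{VaR}_{t+1}(\alpha)\ge 0$, and $\kappa>0$ keeps the sign. (ii) Monotonicity in $\alpha$: since $F^{-1}_{z_{t+1}}$ is nondecreasing, $\alpha_1\le\alpha_2$ gives $\mathrm{VaR}_{t+1}(\alpha_1)\ge\mathrm{VaR}_{t+1}(\alpha_2)$, and $\kappa>0$ preserves the inequality. (iii) Positive homogeneity: scaling the position by $\lambda>0$ scales $\mu_{t+1}$ and $\sigma_{t+1}$, hence $\mathrm{VaR}_{t+1}$, by $\lambda$, and $\kappa(\lambda\,\mathrm{VaR}_{t+1})=\lambda(\kappa\,\mathrm{VaR}_{t+1})$. (iv) Cash invariance and quantile form: a deterministic cash amount shifts $\mu_{t+1}$ only, and $\kappa\sigma_{t+1}z_{t+1}$ is again a location–scale return with volatility $\kappa\sigma_{t+1}$, so $\mathrm{VaR}_{\mathrm{ML}}(\alpha)$ is itself the $\alpha$-quantile of a legitimate model of the form \eqref{e3}. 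I would close by noting that the global statement follows because $\hat r_{t+1}$ takes only two values and both constants are positive.

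\textbf{Expected obstacle.} The delicate point is that $\hat r_{t+1}$ is itself a function of the return series, so in principle a change in the scale of the position (or the addition of a cash buffer) could flip the classifier into the other branch and thereby break positive homogeneity or translation invariance. I would handle this by arguing that the classifier acts on return-level / standardized features and that the threshold $c$ in \eqref{e9a} is defined in the same return units, so a positive rescaling propagates consistently through both $c$ and the features and leaves the predicted label unchanged, while adding a risk-free amount does not alter the risky return process and hence not $\hat r_{t+1}$. The only property that does not survive verbatim is strict translation invariance: under a cash injection $m$ the measure shifts by $\kappa\,m$ rather than $m$, so I would either state cash-invariance regime-by-regime (up to the factor $\kappa$) or restrict the ``fundamental properties'' to monotonicity, positive homogeneity and non-negativity, which hold without qualification.
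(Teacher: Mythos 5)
Your proposal is correct and rests on the same structural observation as the paper's proof: conditioning on the predicted regime $\hat r_{t+1}$ reduces the adjusted measure to a single positive scalar multiple $\kappa\,\mathrm{VaR}_{t+1}(\alpha)$ with $\kappa\in\{1-b_1,\,1+b_2\}$, from which the properties are inherited. Where you differ is in thoroughness and honesty about the edge cases. The paper stops after invoking positive homogeneity and simply asserts that, the adjustment being monotone and sign-preserving, subadditivity and translation invariance ``are unaffected.'' You instead verify non-negativity, monotonicity in $\alpha$, positive homogeneity and the quantile/location--scale representation one by one, and you make explicit two points the paper leaves implicit or overstates: (i) the branch constant is positive only if $b_1<1$ (the paper writes $b_1\in\mathbb{R}^+$ and only implicitly enforces $b_1<0.5$ later in the calibration), and (ii) strict cash translation invariance does \emph{not} survive verbatim, since a cash injection $m$ shifts the adjusted measure by $\kappa m$ rather than $m$, so it holds only regime-by-regime up to the factor $\kappa$ --- a defensible reading, and arguably more accurate than the paper's blanket claim. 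Your treatment of the ``expected obstacle'' (that rescaling the position or adding cash could flip the classifier into the other branch) is also a genuine addition: the paper's argument silently fixes the classification outcome, exactly as you do, but does not discuss why that is legitimate; your argument that the features and the threshold $c$ in \eqref{e9a} live in return units, so that positive rescaling and risk-free cash leave the label unchanged, is the missing justification. The only caveat is that, like the paper, any appeal to subadditivity should be qualified, since VaR itself is not subadditive in general; your choice to restrict the ``fundamental properties'' to those that actually hold is the cleaner resolution.
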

\begin{proof}
Recall that the Value-at-Risk is positively homogeneous, that is, for any $\lambda \geq 0$,
$$
\mathrm{VaR}_{t+1}^{\lambda X}(\alpha) = \lambda \cdot \mathrm{VaR}_{t+1}^{X}(\alpha).
$$
In our framework, the adjustment factor applied to $\mathrm{VaR}_{t+1}(\alpha)$ is either $(1 - b_1)$ or $(1 + b_2)$, depending on the classification outcome. Since $\hat{r}_{t+1}$ takes a single value in {\text{Low risk}, \text{High risk}}, only one branch of the definition is active at any given time, ensuring that
$$
\mathrm{VaR}_{ML}(\alpha) = \kappa \cdot \mathrm{VaR}_{t+1}(\alpha),
$$
with $\kappa \in \{1-b_1,\,1+b_2\}$. By the positive homogeneity of the original VaR, $\mathrm{VaR}_{ML}(\alpha)$ therefore inherits the same property. Moreover, since the adjustment is monotonic and preserves the sign of $\mathrm{VaR}_{t+1}(\alpha)$, subadditivity and translation invariance are unaffected. Hence, $\mathrm{VaR}_{ML}(\alpha)$ retains the theoretical properties of the original Value-at-Risk.
\end{proof}

\begin{figure}[h!]
    \centering
    \includegraphics[width=0.5\linewidth]{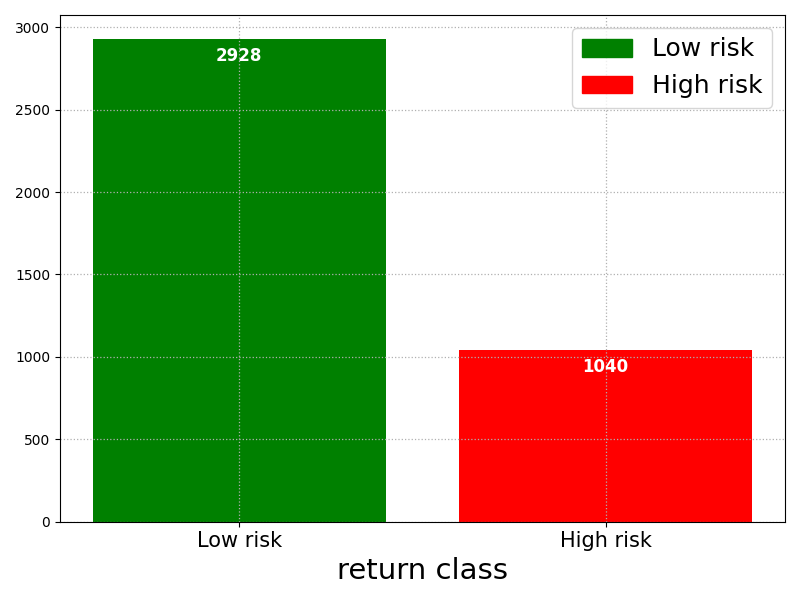}
    \caption{Proportion of different return classes}
    \label{fig3}
\end{figure}

\subsection{Machine Learning}
Figure \ref{fig3} illustrates the imbalanced class problem arising from the threshold-based binary classification of returns presented in Figure \ref{fig2_b}. In this setting, the majority of returns are classified as low-risk, while the minority corresponding to high-risk events linked to Value-at-Risk (VaR) violations remains underrepresented. Traditional supervised learning models such as logistic regression \citep{dreiseitl2002logistic}, Support Vector Machines \citep{vapnik1999nature,tang2008svms} and neural network architectures including perceptrons, multilayer perceptrons \citep{rosenblatt1958perceptron,murtagh1991multilayer}, recurrent neural networks \citep{ampomah2020evaluation,sunny2020deep}, and temporal convolutional networks \citep{lea2016temporal}, tend to favor the majority class, thus reducing their effectiveness in detecting rare but crucial high-risk outcomes.\\

Several sampling strategies have been proposed to address this imbalance. Undersampling methods such as Edited Nearest Neighbors (ENN), Random Undersampling, and Tomek Links \citep{kotsiantis2006handling,barandela2004imbalanced,elhassan2016classification,zeng2016effective,pereira2020mltl} reduce the dominance of the majority class, but at the cost of discarding potentially informative observations, thereby impairing model robustness. In contrast, oversampling techniques such as SMOTE, Borderline-SMOTE, ADASYN, and Random Oversampling \citep{chawla2002smote,han2005borderline,zhang2014rwo} artificially enrich the minority class, but risk overfitting, since synthetic instances may not capture the true underlying distribution.
\newpage 
\noindent Moreover, standard optimization objectives like mean squared error (MSE) or cross-entropy loss are dominated by the majority class, leading to weight updates that minimize overall error without improving minority class detection.\\

To evaluate models under such an imbalance, we employ a comprehensive set of performance metrics. Accuracy, precision, recall, and the F1 score, each capturing different trade-offs between type I and type II errors. However, these metrics remain sensitive to class imbalance, often overstating performance when the majority class dominates. To mitigate this bias, we additionally adopt the Geometric Mean (G-Mean), defined as
$$
G\text{-}Mean = \sqrt{\text{Recall} \times \text{Specificity}},
$$
which balances sensitivity to both minority-class detection (Recall) and majority-class recognition (Specificity). Unlike the F1-score, which emphasizes precision-recall trade-offs, the G-Mean penalizes classifiers that perform well in one class while neglecting the other, making it particularly suited to highly imbalanced financial data.\\

Given the shortcomings of sampling techniques and conventional metrics, we adopt the Double Deep Q-Network (DDQN) \citep{van2016deep} as a reinforcement learning–based alternative. Unlike purely supervised models, DDQN learns adaptive policies through interaction with the environment, optimizing a reward function rather than a static loss. This enables a more effective handling of class imbalance by dynamically focusing on rare but high impact outcomes, offering a more robust framework for risk-sensitive classification and ultimately improving the accuracy of VaR estimation.

\subsection{Reinforcement learning} 
Although supervised learning relies on labeled data to approximate a mapping between inputs and outputs, its performance deteriorates in settings characterized by severe class imbalance or sequential dependence. Reinforcement learning (RL) provides a fundamentally different paradigm by seeking to learn an optimal policy that maximizes cumulative rewards through interaction with a dynamic environment. Rather than minimizing prediction error, the agent operates in a trial-and-error framework: it selects actions, observes outcomes, and updates its strategy based on rewards, enabling adaptive learning in contexts where rules are uncertain or evolve over time.\\

Originally formalized by \cite{sutton1998reinforcement}, RL extends beyond the limitations of dynamic programming \citep{puterman2014markov}, which requires complete knowledge of the environment and suffers from the 'curse of dimensionality' and beyond Monte Carlo approaches \citep{gilks1995markov}, which fail to exploit temporal dependencies effectively. The standard framework is that of a Markov Decision Process (MDP) \citep{white1989markov}, in which states, actions, and rewards are linked by probabilistic transitions. Central to this approach is the value function, which estimates expected long-term returns and is recursively defined by the Bellman equation. The agent–environment interaction underlying this MDP structure is illustrated in Figure \ref{fig4}, which highlights the sequential feedback loop that drives policy optimization.\\

The integration of deep learning into RL has led to powerful algorithms such as the Double Deep Q-Network (DDQN) \citep{van2016deep}  capable of scaling to high-dimensional and nonstationary environments. Using this framework, RL offers a flexible and robust alternative to supervised models, particularly in financial contexts characterized by volatility, structural breaks, and class imbalance. Its adaptive decision-making capacity makes it especially suitable for Value-at-Risk estimation, where capturing rare but high-impact events is essential for effective risk management.

\begin{figure}[H]
\centering
\includegraphics[scale=0.7]{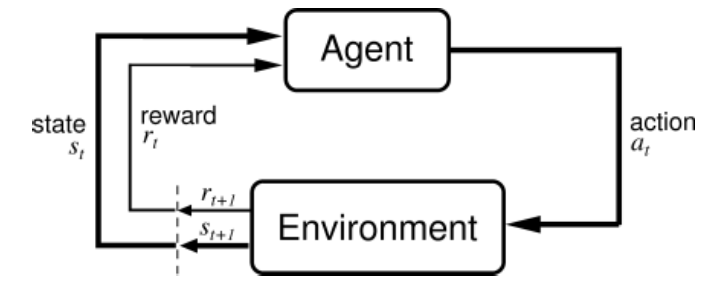}
\caption{Agent-environment interaction model \citep{sutton1998reinforcement}}
\label{fig4}
\end{figure}
\noindent Building on these principles, we adopt the Double Deep Q-Network (DDQN) framework, which combines the flexibility of reinforcement learning with the representational power of deep neural networks, providing an adaptive and robust approach to addressing the class imbalance and dynamic risk conditions inherent in Value-at-Risk estimation.

\subsubsection{Markov Decision Process}
\label{sec2_6_1}
To address the challenge of Value-at-Risk estimation under class imbalance, we formulate the problem as a Markov Decision Process (MDP). An MDP is defined by the tuple $\langle \mathcal{S}, \mathbb{A}, \mathcal{P}, \mathcal{R}, \gamma \rangle$, where each element corresponds to a key component of the sequential decision-making framework.
\begin{itemize}
\item States ($\mathcal{S}$): Each state $s_t \in \mathcal{S}$ represents the financial environment at time $t$, including the information needed to classify the return (e.g., recent volatility dynamics and lagged returns).

\item Actions ($\mathbb{A}$): The agent chooses between two possible actions: $a_t \in {0,1}$, where $0$ indicates a prediction of 'low risk' (majority class) and $1$ indicates 'high risk' (minority class).

\item State Transitions ($\mathcal{P}$): The transition function $\mathcal{P}$ governs how the environment evolves after an action. In our context, transitions are deterministic in the sense that the next state $s_{t+1}$ is determined by the sequence of observed returns, while the chosen action influences only the reward signal.

\item Rewards ($\mathcal{R}$): The reward function evaluates the quality of the agent’s classification and follows a design widely used in the reinforcement learning literature for unbalanced classification tasks \citep{firdous2023imbalanced}. In particular, the correct detection of rare but critical 'high-risk' events (true positives) is strongly rewarded, while missing such an event (false negative) incurs the most severe penalty. To correct for class imbalance, we incorporate a scaling factor $\rho \in (0,1]$, defined as the ratio of minority to majority class sizes. This ensures that both classes contribute proportionally to the agent’s learning process, a principle consistent with cost-sensitive reinforcement learning frameworks.
The reward $R_{t+1}$ can be defined as :
\begin{equation}
R_{t+1} = 
\left\{
    \begin{array}{ll}
        + \rho &  \mbox{$a_t$ is True Negative}  \\
        -\rho  &  \mbox{$a_t$ is False Negative}\\
        + 1    &  \mbox{$a_t$ is True Positive} \\
        -1     &  \mbox{$a_t$ is False Positive}
    \end{array}
\right.
\label{eq;;eq12x}
\end{equation}
Concretely:
\begin{itemize}[label=$\square$]
  \item True Negative:  $+\rho$ (moderate reward for correct classification of the majority class)
  \item False Negative: $-\rho$ (penalty weighted to reflect the rarity and severity of high-risk events)
  \item True Positive:  $+1$ (successful detection of a critical event)
  \item False Positive: $-1$ (penalty to discourage unnecessary false alarms).
\end{itemize}
\item Discount Factor ($\gamma$): Finally, $\gamma \in [0,1]$ balances immediate and future rewards, ensuring that the agent optimizes long-term performance rather than short-sighted performance.
\end{itemize}
%

\subsubsection{Policy and Value Functions}
\label{sec2_6_2}

In reinforcement learning, a policy $\pi: \mathcal{S} \rightarrow \mathbb{A}$ defines the decision rule mapping each state $s_t \in \mathcal{S}$ to a probability distribution over possible actions. In our setting, the policy can be viewed as a classifier, where actions correspond to risk labels: $a \in \{0,1\}$, with 0 denoting 'low risk' and 1 denoting 'high risk.' Formally,
\begin{equation}
\pi_t(a|s) = \mathbb{P}(a_t=a \,|\, s_t=s).
\label{eq;;eq13x}
\end{equation}
The agent’s goal is to maximize the expected cumulative discounted reward, defined as:
\begin{equation}
G_t = \sum_{k=0}^{\infty} \gamma^{k} R_{t+k},
\label{eq;;eq14x}
\end{equation}
where $\gamma \in [0,1)$ is the discount factor that balances the importance of immediate versus future rewards.\\
The state value function in policy $\pi$ measures the expected return starting from state $s$ and then following policy $\pi$:
\begin{equation}
v_{\pi}(s) = \mathbb{E}_{\pi}[G_t \,|\, s_t=s].
\label{eq;;eq15x}
\end{equation}
The optimal state-value function is then
\begin{equation}
v^{*}(s) = \max_{\pi} v_{\pi}(s), \quad \forall s \in \mathcal{S}.
\label{eq;;eq16x}
\end{equation}
Similarly, the action value function (or Q function) evaluates the expected return of taking action $a$ in state $s$ and thereafter following policy $\pi$:
\begin{equation}
Q_{\pi}(s,a) = \mathbb{E}_{\pi}[G_t \,|\, s_t=s, a_t=a].
\label{eq;;eq17x}
\end{equation}
The optimal Q-function is
\begin{equation}
Q^{*}(s,a) = \max_{\pi} Q_{\pi}(s,a), \quad \forall s \in \mathcal{S}, \, a \in \mathbb{A}.
\label{eq;;eq18x}
\end{equation}
Using Bellman’s optimality principle, the recursive form is expressed as follows:
\begin{equation}
Q^{*}(s,a) = \mathbb{E}\left[R_t + \gamma \max_{a'} Q^{*}(s_{t+1}, a') \,\middle|\, s_t=s, a_t=a \right].
\label{eq;;eq19x}
\end{equation}
This formalization, consistent with the applications in finance \citep{cui2023portfolio}, allows the policy to dynamically adapt to signals. The agent gradually learns to classify high-risk states more accurately by associating state–action pairs with long-term returns, overcoming the limitations of static supervised learning in highly imbalanced financial data.

\subsubsection{Double Deep Q-Network (DDQN)}
\label{sec2_6_3}

Traditional Q-learning \citep{watkins1989learning} estimates the value of each state–action pair through iterative updates, but its reliance on tabular Q-tables makes it impractical in environments with large or continuous state spaces. 
The transition from Q-learning to Deep Q-Networks (DQN) addresses the curse of dimensionality inherent in tabular approaches. In DQN, the Q-function is approximated by a deep neural network parameterized by $\theta$, yielding an estimate of the value of taking action $a_t$ in state $s_t$. The target value combines the immediate reward with the discounted future return:
\begin{equation}
y_t^{DQN} = R_{t+1} + \gamma \max_{a_{t+1}\in \mathbb{A}} \hat{Q}_{\pi}(s_{t+1},a_{t+1};\theta^{-}),
\label{eq;;eq20x}
\end{equation}
where $R_{t+1}$ reflects the reward associated with the classification at time $t+1$, $\gamma$ is the discount factor, and $\theta^{-}$ denotes the weights of the target network, updated periodically to stabilize learning. The network parameters are optimized by minimizing the loss function:
\begin{equation}
\mathcal{L}^{DQN}(\theta) = \mathbb{E}\left[ \big(y_t^{DQN} - Q_{\pi}(s_t,a_t;\theta)\big)^{2} \right].
\label{eq;;eq21x}
\end{equation}
This loss represents the expected square difference between the predicted Q-value and the target $y_t^{DQN}$. In our classification context, minimizing $\mathcal{L}^{DQN}(\theta)$ ensures that the model better aligns its predictions (classifying 'Low risk' vs. 'High risk') with the true risk labels, especially rewarding correct detection of minority ('high-risk') cases. The replay buffer supplies mini-batches of past experiences, enabling balanced exposure to both classes despite data imbalance.
Despite these improvements, standard DQNs often suffer from systematic overestimation of Q-values, leading to suboptimal policy selection. To mitigate this, the Double Deep Q-Network (DDQN) introduces two distinct networks: an evaluation network that selects actions and a target network that evaluates them. The update rule can be expressed compactly as
\begin{equation}
Q_{\pi}^{(i)}(s_t,a_t) \;\leftarrow\; (1-\alpha) Q_{\pi}^{(i)}(s_t,a_t) 
+ \alpha \Big[ R_{t+1} + \gamma \max_{a_{t+1}\in \mathbb{A}} Q_{\pi}^{(j)}(s_{t+1},a_{t+1}) \Big],
\label{eq;;eq22x}
\end{equation}
with $i \in \{1,2\}$, $j=3-i$, and $\alpha$ the learning rate. Here, one network determines which action (classification) appears optimal, while the other provides the corresponding Q-value, reducing the upward bias of single-network estimation. The DDQN target value is then
\begin{equation}
y_t^{DDQN} = R_{t+1} + \gamma \max_{a_{t+1}\in \mathbb{A}} \hat{Q}_{\pi}\!\left(s_{t+1},\arg\max_{a_{t+1}\in \mathbb{A}} Q_{\pi}(s_{t+1},a_{t+1};\theta);\theta^{-}\right).
\label{eq;;eq23x}
\end{equation}
Training proceeds by minimizing the corresponding DDQN loss function,
\begin{equation}
\mathcal{L}^{DDQN}(\theta) = \mathbb{E}\!\left[ \big(y_t^{DDQN} - Q_{\pi}(s_t,a_t;\theta)\big)^{2} \right].
\label{eq;;eq24x}
\end{equation}
In practice, this loss penalizes incorrect risk predictions more heavily for the minority class ('high risk') due to the reward structure, ensuring that the model gradually prioritizes the accurate classification of rare but critical events. By combining experience replay with the alternating update of evaluation and target networks, DDQN achieves both stability and robustness, directly improving the detection of Value-at-Risk violations.

\section{Empirical results}
\label{sec3}
\subsection{Data and Features Selection}
We evaluated our models using the Euro Stoxx 50 index, which represents the 50 largest capitalizations in the eurozone. The dataset, obtained from Yahoo\footnote{\url{https://fr.finance.yahoo.com/}} Finance, covers the period from September 1, 2008, to March 13, 2025. Although the Euro Stoxx 50 is a natural benchmark for European equity markets, it also presents important challenges for predictive modeling. Because the index undergoes periodic revisions, with constituents added or removed to reflect evolving market conditions, the direct use of its lagged components introduces discontinuities and structural breaks. This dynamic would require continuous recalibration of models, reducing both their robustness and their generalizability.\\

To address this issue, we adopt a more stable framework that emphasizes variables that capture broad market and macroeconomic dynamics rather than firm-specific fluctuations. The explanatory set therefore integrates major eurozone equity indices such as the CAC 40 and DAX, which provide information on regional and sectoral diversification. Key currency pairs including EUR/USD and EUR/GBP are incorporated to reflect the role of exchange rates in European competitiveness and capital flows. In addition, technical indicators such as Simple and Exponential Moving Averages and the Relative Strength Index are included, as they translate price dynamics into signals that can anticipate short-term market reversals. Finally, conditional volatility forecasts derived from GARCH and GJR-GARCH specifications are employed to account for the heteroskedastic and asymmetric nature of financial returns.

\newpage

\begin{figure}[h!]
    \centering
    \includegraphics[width=0.7\linewidth]{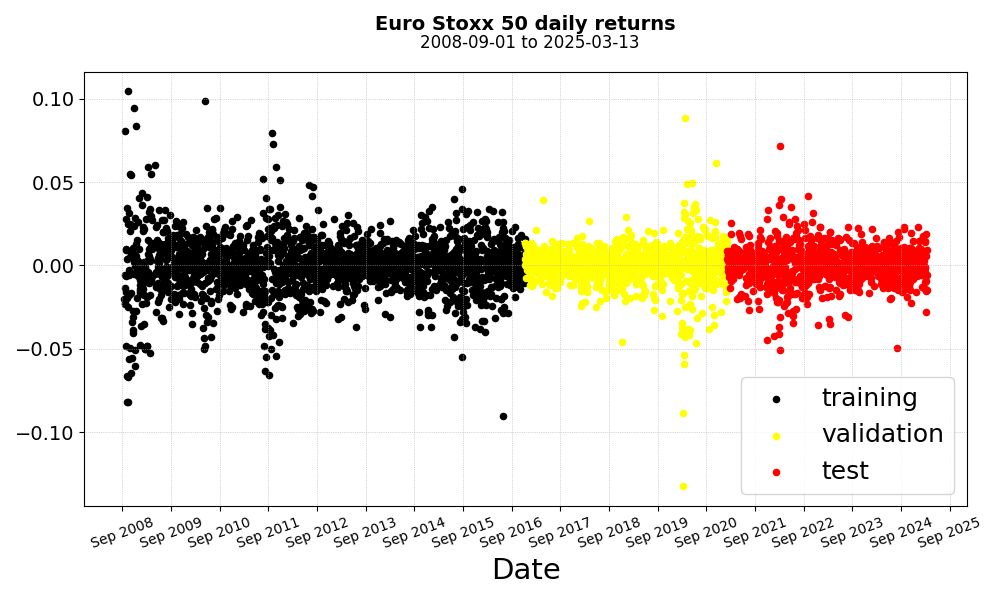}
    \caption{Data set splitting}
    \label{fig5}
\end{figure}
To ensure a rigorous and unbiased evaluation of the predictive framework, the data set is chronologically divided into three distinct sub-samples: training, validation, and test (Figure \ref{fig5}). The training sample, which constitutes the bulk of the data, is used to estimate the model parameters by learning the relationships between the explanatory variables and the target risk classification. The validation set serves to fine-tune the hyperparameters and prevent overfitting, thereby enhancing the generalizability of the model. The final test sample is reserved exclusively for out-of-sample evaluation, providing a realistic assessment of predictive performance on unseen data and ensuring that the reported results are not driven by in-sample optimization.

\begin{figure}[H]
    \centering
    \includegraphics[width=0.9\linewidth]{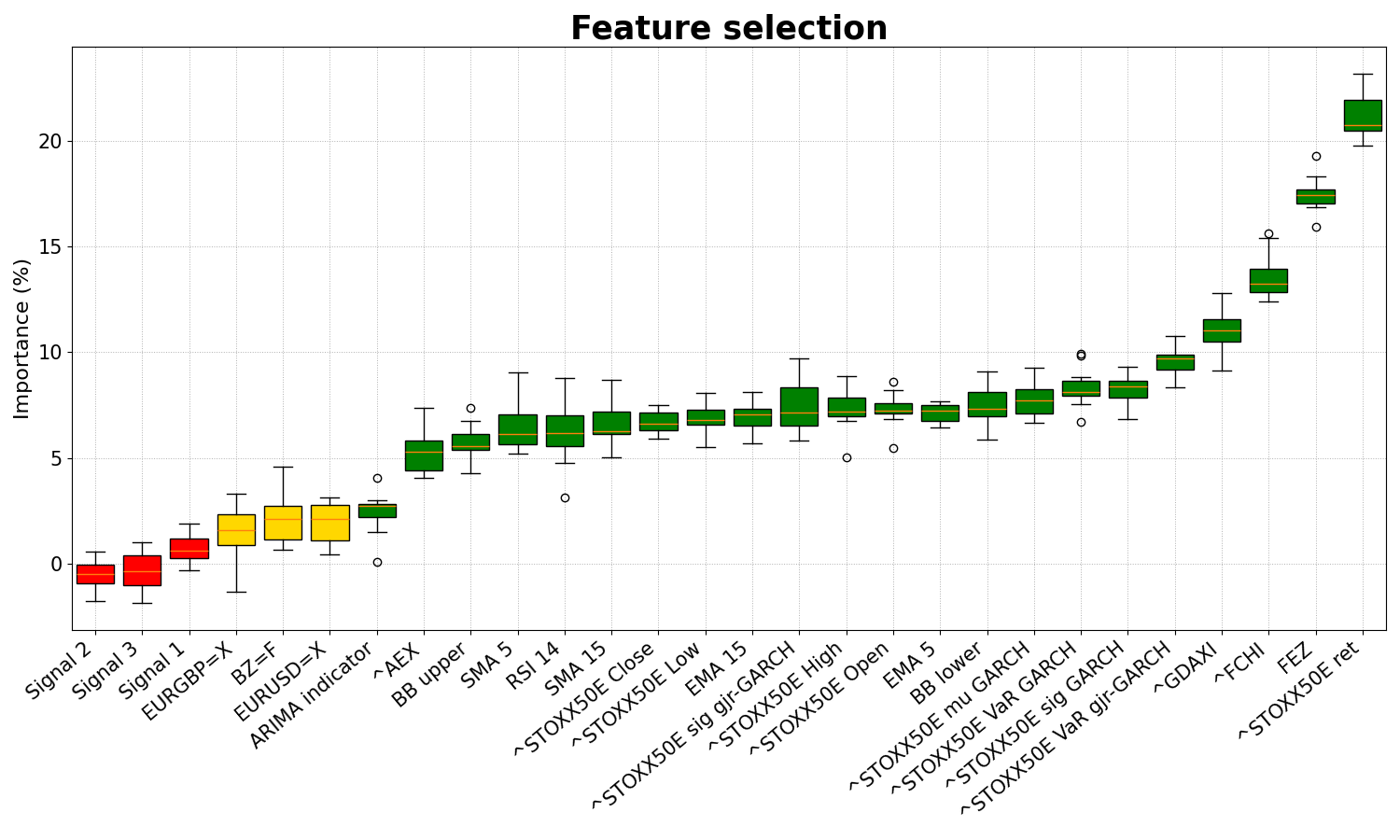}
    \caption{Features selection}
    \label{fig6}
\end{figure}

The selection of features is performed using the Boruta algorithm \citep{kursa2010feature,kursa2010boruta}, a wrapper method particularly suited for high-dimensional nonlinear financial datasets. This approach identifies explanatory variables with a statistically significant contribution to predictive accuracy. Figure \ref{fig6} presents the ranking of the selected features. Economically, the results are intuitive: The lagged returns of the Euro Stoxx 50 account for roughly 20\% of the explanatory power, consistent with the notion that the past information is rapidly incorporated by the European equity markets. The CAC 40 contributes approximately 13\%, reflecting the importance of sectoral and regional effects. The volatility forecasts explain the difference between 8\% and 10\%, confirming the central role of expected volatility in risk formation. Interestingly, the FEZ ETF, which closely tracks the Euro Stoxx 50, adds an additional 17\%, suggesting that liquidity effects and intraday adjustments convey information not fully embedded in the raw index. In contrast, exchange rates such as EUR/USD and EUR/GBP display weaker explanatory power, consistent with their indirect influence on daily equity risk. Technical indicators also provide additional, though more limited, information, particularly in detecting momentum shifts.\\

The descriptive statistics for the returns, explanatory characteristics, strategy positions, and qualitative variables are reported in Tables \ref{tab1}, \ref{tab2}, \ref{tab3} and \ref{tab4}. Table \ref{tab1} confirms that the return distributions are stationary but non-normal, exhibiting skewness and excess kurtosis. These findings underscore the prevalence of fat tails and justify the reliance on nonlinear predictive models. While stationarity ensures the validity of econometric inference, the presence of non-normality highlights the need to capture extreme events. Table \ref{tab2} further reveals a significant heterogeneity in the scale of the explanatory variables, motivating the use of Min-Max normalization as a pre-processing step. This transformation improves numerical stability and accelerates convergence in hybrid econometric-machine learning frameworks, according to recommendations in the literature \citep{han2005borderline}. Tables \ref{tab3} and \ref{tab4} provide additional information on the signals generated by trading strategies and the forecasts derived from the ARIMA and hybrid ARIMA-ANN models, both of which enrich the information set for subsequent classification tasks.

\begin{table}[H]
\centering
\resizebox{12.7cm}{!}{
\begin{tabular}{|l|rrrrrrrr|}
\hline
Assets & Mean (\%) & Std (\%) & Skewness & Kurtosis & Min & Max & ADF & Jarque-Bera  \\
\midrule
EURUSD=X & -0.007 & 0.724 & 0.782 & 113.692 & -0.143 & 0.160 & -12.606 & 2136941.801 \\
EURGBP=X & 0.001 & 0.598 & 0.166 & 62.569 & -0.111 & 0.110 & -33.930 & 647122.057 \\
BZ=F & -0.006 & 2.440 & -0.606 & 13.962 & -0.280 & 0.212 & -63.680 & 32465.986 \\
FEZ & 0.019 & 1.741 & -0.409 & 9.130 & -0.133 & 0.162 & -11.813 & 13887.933 \\
$^{\wedge}$FCHI & 0.017 & 1.409 & -0.188 & 8.671 & -0.131 & 0.106 & -21.735 & 12451.577 \\
$^{\wedge}$GDAXI & 0.034 & 1.399 & -0.087 & 8.742 & -0.131 & 0.116 & -20.348 & 12637.285 \\
$^{\wedge}$AEX & 0.022 & 1.311 & -0.327 & 9.672 & -0.114 & 0.100 & -12.401 & 15532.029 \\
$^{\wedge}$STOXX50E & 0.014 & 1.429 & -0.245 & 7.909 & -0.132 & 0.104 & -21.293 & 10379.093 \\
\hline
\end{tabular} 
}
\caption{Descriptive statistics for returns} 
\label{tab1}
\end{table}
\begin{table}[H]
\centering
\resizebox{10.cm}{!}{
\begin{tabular}{|l|rrrr|}
\hline
Features & Mean  & Std  & Min & Max \\
\midrule
$^{\wedge}$STOXX50E Close & 3346.722 & 706.846 & 1809.980 & 5540.690 \\
$^{\wedge}$STOXX50E High & 3370.447 & 705.616 & 1823.250 & 5568.190 \\
$^{\wedge}$STOXX50E Low & 3321.543 & 707.570 & 1765.490 & 5506.570 \\
$^{\wedge}$STOXX50E Open & 3346.747 & 706.032 & 1812.780 & 5531.530 \\
\hline
BB upper & 3469.825 & 694.591 & 2139.143 & 5597.252 \\
BB lower & 3213.444 & 701.508 & 1670.766 & 5324.596 \\
EMA 5 & 3345.631 & 703.870 & 1854.835 & 5495.911 \\
EMA 15 & 3342.931 & 696.996 & 1954.528 & 5455.968 \\
SMA 5 & 3345.639 & 704.382 & 1857.532 & 5490.536 \\
SMA 15 & 3342.891 & 698.354 & 1925.779 & 5481.875 \\
RSI 14 & 52.626 & 11.333 & 10.728 & 79.894 \\
\hline
$^{\wedge}$STOXX50E $\mu$ GARCH & -0.000 & 0.002 & -0.049 & 0.024 \\
$^{\wedge}$STOXX50E sig GARCH & 0.013 & 0.006 & 0.005 & 0.057 \\
$^{\wedge}$STOXX50E sig gjr-GARCH & 0.013 & 0.006 & 0.005 & 0.057 \\
$^{\wedge}$STOXX50E VaR GARCH & -0.021 & 0.011 & -0.138 & -0.006 \\
$^{\wedge}$STOXX50E VaR gjr-GARCH & -0.021 & 0.011 & -0.138 & -0.006 \\
\hline
\end{tabular}
}
\caption{Descriptive statistics for features} 
\label{tab2}
\end{table}
%
\begin{table}[H]
\centering
\resizebox{9cm}{!}{
\begin{tabular}{|l|rrr|}
\hline
Strategy & Sell & Stay & Buy \\ 
\hline
Simple Moving
Average Cross & 3681 & 143 & 143 \\
Exponential Moving Average Cros & 3631 & 168 & 168 \\
Relative Strength Index & 3662 & 165 & 140 \\
\hline
\end{tabular}
}
\caption{Descriptive statistics for positions in different trading strategies} 
\label{tab3}
\end{table}
\newpage
%

\begin{table}[H]
\centering
\resizebox{7.cm}{!}{
\begin{tabular}{|l|rr|}
  \hline
 Variables & Low risk & High risk \\ 
  \hline
  ARIMA  indicator  & 2951 &  1016 \\  
   \hline
   \textbf{risk level}  & \textbf{2928} &  \textbf{1039} \\  
   \hline
\end{tabular}
}
\caption{Descriptive statistics of qualitative variables}
\label{tab4} 
\end{table}
\subsection{Classification under Imbalanced Data}
The target variable is the binary risk level, defined in Section \ref{sec_2_3} and visualized in Figure \ref{fig7}. In training, validation and test sets, low-risk observations represent about 70\%, with high-risk accounting for 30\%. Although not extreme, this imbalance poses a challenge: Standard classifiers tend to overexcite the majority class, impairing the detection of high-risk periods, precisely those with the greatest economic costs.
To address this, we compared under-sampling and over-sampling strategies. Undersampling reduced imbalance, but at the cost of lost information. In contrast, the ADASYN algorithm \citep{he2008adasyn}  effectively generated synthetic minority samples, improving the detection of high-risk states with minimal bias, albeit at the cost of a longer training.

\begin{figure}[H]
 \centering
 \begin{subfigure}[b]{0.3\textwidth}
     \centering
     \includegraphics[width=\textwidth]{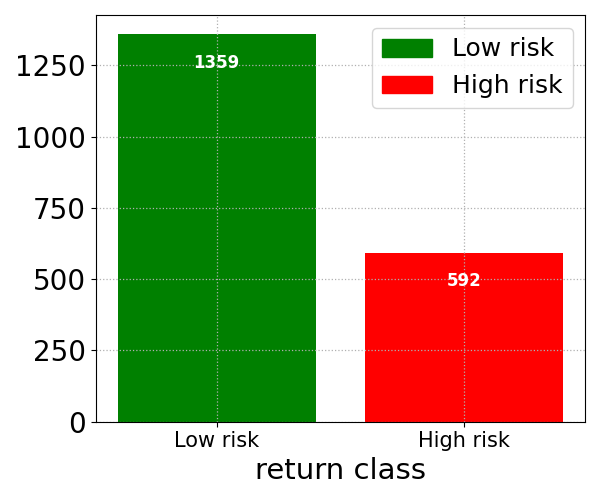}
     \caption{Over training period}
  \label{fig7_1}
 \end{subfigure}
 \hfill
 \begin{subfigure}[b]{0.3\textwidth}
     \centering
     \includegraphics[width=\textwidth]{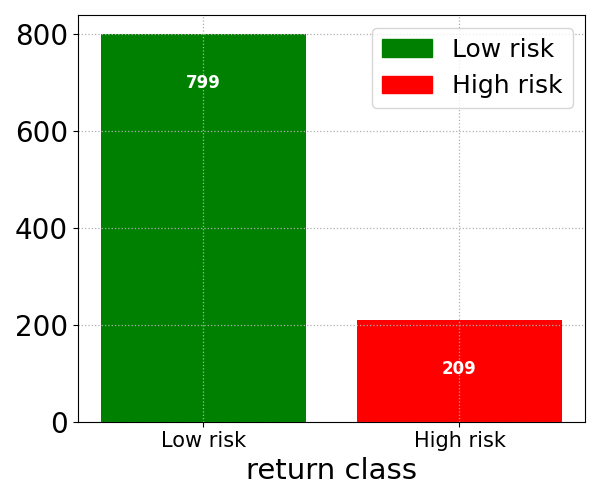}
     \caption{Over validation period}
  \label{fig7_2}
 \end{subfigure}
 \hfill
 \begin{subfigure}[b]{0.3\textwidth}
     \centering
     \includegraphics[width=\textwidth]{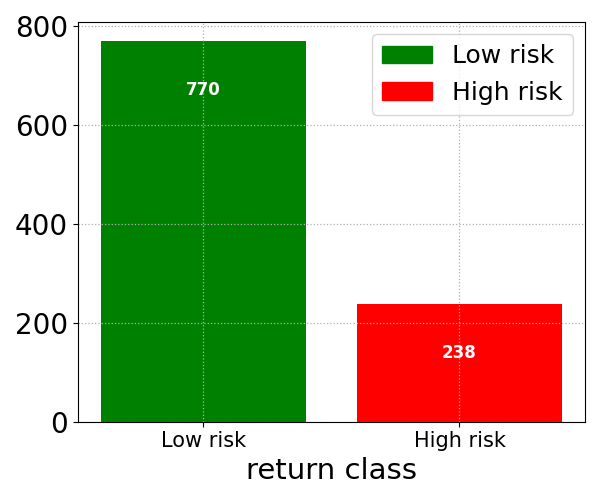}
     \caption{Over testing period}
     \label{fig7_3}
 \end{subfigure}
    \caption{Barplot of different risk levels}
    \label{fig7}
\end{figure}

We benchmarked multiple models: Logistic regression (LR), Support Vector Machines (SVM), Single-Layer Perceptron (ANN), Multi-Layer Perceptron (MLP), Temporal Convolutional Network (TCN), and our Double Deep Q-Network (DDQN). All supervised methods were combined with ADASYN for fairness.
Model performance is assessed not only by accuracy, but also by F1 score, recall, precision, and G-mean, as accuracy alone may be misleading in imbalanced contexts. Economically, these metrics align with the risk manager’s priorities: Recall reflects the ability to capture high-risk episodes; precision limits costly false alarms; and G-Mean balances the two, avoiding asymmetric errors.
Table \ref{tab5} reports the results. Reinforcement Learning achieved the strongest performance: accuracies of 83.5\% (validation) and 79.4\% (test), F1 scores greater than 0.66 and recall values of 0.574 (validation) and 0.54 (test), the highest among all models. G-Mean scores (0.735 validation, 0.714 test) confirm balanced detection between classes. TCN also performed well, but consistently behind RL.\\

The economic importance is clear: the superior recall of the RL framework minimizes undetected high-risk returns, reducing the likelihood of underestimated losses, and enabling more proactive risk mitigation strategies.

%
\begin{table}[H]
\begin{subtable}[c]{0.5\textwidth}
\centering
\resizebox{8.2cm}{!}{
\begin{tabular}{|l|rrrrrr|}
  \hline
 Metric & LR & SVM & ANN & MLP & TCN & RL \\ 
  \hline
Accuracy & 0.616 & 0.639 & 0.618 & 0.719 & 0.788 & 0.835 \\
F1-score & 0.375 & 0.370 & 0.368 & 0.499 & 0.587 & 0.668 \\
Recall & 0.283 & 0.290 & 0.280 & 0.396 & 0.492 & 0.574 \\
Precision & 0.555 & 0.512 & 0.536 & 0.675 & 0.727 & 0.799 \\
G-Mean & 0.489 & 0.494 & 0.485 & 0.596 & 0.672 & 0.735 \\
   \hline
\end{tabular}
}
\label{tab5_a}
\subcaption{On validation sample}
\end{subtable}
\begin{subtable}[c]{0.5\textwidth}
\centering
\resizebox{8.2cm}{!}{
\begin{tabular}{|l|rrrrrr|}
  \hline
 Metric & LR & SVM & ANN & MLP & TCN & RL \\ 
  \hline
Accuracy & 0.514 & 0.524 & 0.530 & 0.678 & 0.720 & 0.794 \\
F1-score & 0.412 & 0.431 & 0.398 & 0.527 & 0.595 & 0.661 \\
Recall & 0.289 & 0.300 & 0.285 & 0.403 & 0.452 & 0.540 \\
Precision & 0.723 & 0.765 & 0.660 & 0.761 & 0.870 & 0.853 \\
G-Mean & 0.492 & 0.508 & 0.485 & 0.602 & 0.653 & 0.714 \\ 
   \hline
\end{tabular}
}
\label{tab5_b}
\subcaption{On testing sample}
\end{subtable}
\caption{Model performance}
\label{tab5}
\end{table}

\newpage
\begin{figure}[H]
    \centering
    \includegraphics[width=0.8\linewidth]{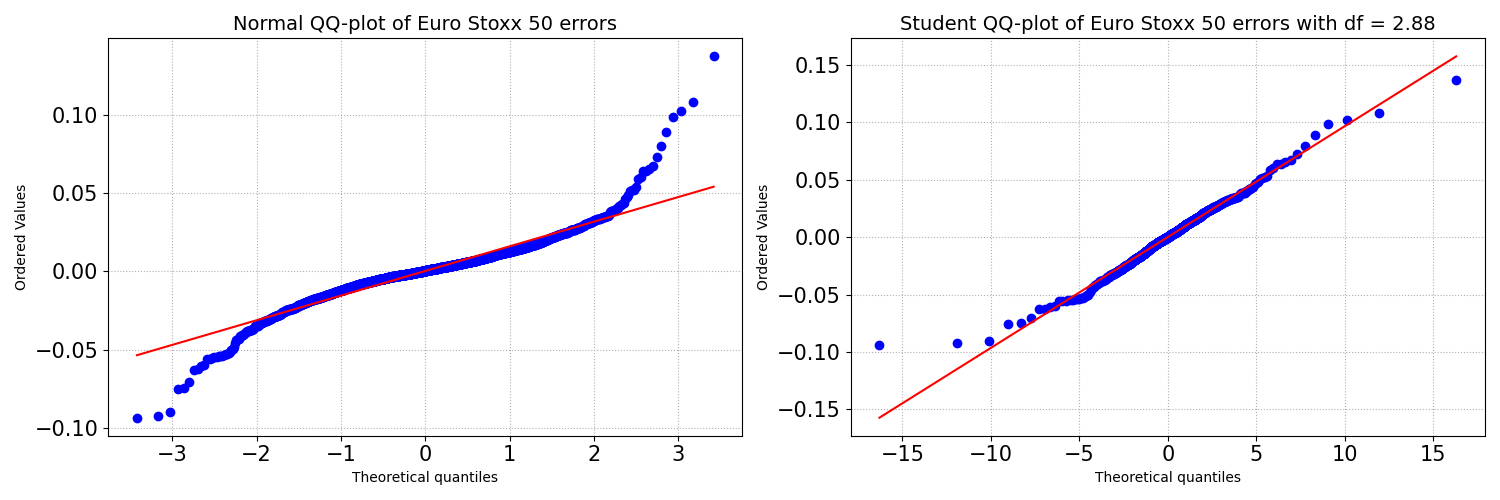}
    \caption{QQ-plot of Euro Stoxx 50 errors distribution}
    \label{fig8}
\end{figure}

\subsection{Evaluation of the Classification-Adjusted Value-at-Risk}

After determining the accuracy of the risk classification, we now evaluate the Classification Adjusted Value-at-Risk (VaR) framework defined in Equation \eqref{e11_bis}. This approach scales traditional GARCH or GJR-GARCH VaR estimates up or down depending on predicted risk level, using adjustment parameters $b_1$ and $b_2$.
To capture conditional volatility dynamics, we use both the GARCH and GJR-GARCH specifications. Given empirical evidence of fat tails (Table \ref{tab1}, Figure \ref{fig8}), we adopt the Student distribution $t$, which better reflects the heavy-tailed nature of the Euro Stoxx 50 returns.
Tables \ref{tab6}–\ref{tab8} summarize the results. As shown in Table \ref{tab6}, the RL-adjusted VaR significantly reduces skewness and kurtosis in predictive distributions. For example, the $VaR^{\text{GARCH}}_{\text{RL}}(\alpha=5\%)$ model achieves a skewness of -2.048 and a kurtosis of 8.613 in the test sample, far closer to empirical distributions than conventional models.
The results of the backtest confirm the robustness. The Christoffersen conditional coverage test (Table \ref{tab7}) indicates that RL-adjusted VaR models pass through validation and test samples, for both 5\% and 1\% levels, unlike conventional models that are systematically rejected. The Kupiec test (Table \ref{tab8}) further validates the adequacy of the violation frequencies, with the RL models showing no systematic bias.\\

From an economic point of view, these results demonstrate that RL-adjusted VaR not only achieves statistical validity but also enhances risk management decisions. By reducing false alarms (overestimated risk) and missed warnings (underestimated risk), it enables more efficient capital allocation, ensures regulatory compliance, and improves early warning capacity during volatile periods.

\begin{table}[H]
\begin{subtable}[c]{0.5\textwidth}
\centering
\resizebox{8.32cm}{!}{
\begin{tabular}{|l|l|rrrrrrr|}
\hline
Model                                      & Version  & Min & Median & Mean & Std & Skewness & Kurtosis & Max \\ 
\hline
                                           & Original & -0.136 & -0.013 & -0.016 & 0.012 & -5.101 & 35.265 & -0.005 \\ 
$\small{VaR^{\text{GARCH}}(\alpha=5\%)}$   & TCN      & -0.129 & -0.010 & -0.014 & 0.012 & -4.747 & 31.630 & -0.004 \\
                                           & RL       & -0.138 & -0.010 & -0.014 & 0.011 & -4.929 & 36.258 & -0.004 \\ 
  \hline
                                           & Original & -0.148 & -0.013 & -0.016 & 0.012 & -5.436 & 40.562 & -0.005 \\ 
$\small{VaR^{\text{GJR}}(\alpha=5\%)}$     & TCN      & -0.141 & -0.010 & -0.014 & 0.012 & -4.891 & 35.097 & -0.004 \\
                                           & RL       & -0.137 & -0.011 & -0.014 & 0.012 & -5.092 & 38.121 & -0.004 \\ 
  \hline
                                           & Original & -0.194 & -0.022 & -0.027 & 0.018 & -4.481 & 27.036 & -0.011 \\ 
$\small{VaR^{\text{GARCH}}(\alpha=1\%)}$   & TCN      & -0.190 & -0.017 & -0.024 & 0.018 & -4.280 & 25.984 & -0.008 \\
                                           & RL       & -0.205 & -0.017 & -0.023 & 0.018 & -4.402 & 29.152 & -0.008 \\
  \hline
                                           & Original & -0.215 & -0.021 & -0.027 & 0.019 & -4.772 & 31.483 & -0.010 \\ 
 $\small{VaR^{\text{GJR}}(\alpha=1\%)}$    & TCN      & -0.210 & -0.017 & -0.024 & 0.019 & -4.350 & 28.127 & -0.007 \\ 
                                           & RL       & -0.204 & -0.018 & -0.023 & 0.019 & -4.534 & 30.670 & -0.007 \\ 
   \hline
\end{tabular}
}
\label{tab6_a}
\subcaption{On validation sample}
\end{subtable}
\begin{subtable}[c]{0.5\textwidth}
\centering
\resizebox{8.32cm}{!}{
\begin{tabular}{|l|l|rrrrrrr|}
\hline
Model & Version & Min & Median & Mean & Std & Skewness & Kurtosis & Max \\ 
\hline
                                           & Original & -0.059 & -0.015 & -0.017 & 0.005 & -2.588 & 13.231 & -0.008 \\ 
$\small{VaR^{\text{GARCH}}(\alpha=5\%)}$   & TCN      & -0.071 & -0.014 & -0.015 & 0.007 & -2.118 & 9.694 & -0.005 \\
                                           & RL       & -0.069 & -0.013 & -0.015 & 0.006 & -2.048 & 8.613 & -0.005 \\
  \hline
                                           & Original & -0.054 & -0.015 & -0.017 & 0.006 & -2.033 & 6.564 & -0.007 \\
$\small{VaR^{\text{GJR}}(\alpha=5\%)}$     & TCN      & -0.065 & -0.014 & -0.016 & 0.008 & -1.759 & 5.142 & -0.005 \\
                                           & RL       & -0.057 & -0.013 & -0.015 & 0.007 & -1.757 & 4.732 & -0.005 \\
  \hline
                                           & Original & -0.097 & -0.026 & -0.028 & 0.008 & -2.485 & 11.885 & -0.014 \\ 
$\small{VaR^{\text{GARCH}}(\alpha=1\%)}$   & TCN      & -0.116 & -0.024 & -0.026 & 0.011 & -2.031 & 8.645 & -0.010 \\ 
                                           & RL       & -0.116 & -0.022 & -0.025 & 0.010 & -2.058 & 8.783 & -0.010 \\ 
  \hline
                                           & Original & -0.083 & -0.026 & -0.028 & 0.009 & -1.997 & 6.015 & -0.013 \\
 $\small{VaR^{\text{GJR}}(\alpha=1\%)}$    & TCN      & -0.100 & -0.023 & -0.026 & 0.013 & -1.723 & 4.754 & -0.010 \\
                                           & RL       & -0.096 & -0.022 & -0.025 & 0.011 & -1.773 & 4.874 & -0.010 \\
   \hline
\end{tabular}
}
\label{tab6_b}
\subcaption{On testing sample}
\end{subtable}
\caption{Statistical indicators for VaR models}
\label{tab6}
\end{table}
%
\begin{table}[H]
\begin{subtable}[c]{0.5\textwidth}
\centering
\resizebox{8.5cm}{!}{
\begin{tabular}{|l|l|ccccc|l|}
\hline
Model                         & Version & Expected  & Actual  & Test  & Test  &  Test  & Decision \\
 & & violations &  violations & statistic & critical value &  p-value &  \\
\hline
                              & Original & 50                 & 73  & 9.4259 & 3.8414 & 0.0368 & Reject $H_0$ \\
$VaR^{GARCH}(\alpha=5\%)$     & TCN      & 50                 & 72  & 8.6521 & 3.8414 & 0.0032 & Reject $H_0$ \\
                              & RL       & 50                 & 52  & 0.0075 & 3.8414 & 0.9310 & Accept $H_0$ \\
\hline
                              & Original & 50                 & 68  & 5.8597 & 3.8414 & 0.0154 & Reject $H_0$ \\
$VaR^{GJR}(\alpha=5\%)$ & TCN      & 50                 & 72  & 8.6521 & 3.8414 & 0.0032 & Reject $H_0$  \\
                              & RL       & 50                 & 53  & 0.4234 & 3.8414 & 0.5002 & Accept $H_0$ \\
\hline
                              & Original & 10                  & 19  & 6.3276  & 3.8414 & 0.0118 & Reject $H_0$ \\
$VaR^{GARCH}(\alpha=1\%)$     & TCN      & 10                  & 25  & 13.9951 & 3.8414 & 0.0002 & Reject $H_0$ \\
                              & RL       & 10                  &12 & 0.8288  & 3.8414  & 0.3856 & Accept $H_0$ \\
\hline
                              & Original & 10                  & 19   & 6.3276  & 3.8414 & 0.0118 & Reject $H_0$ \\
$VaR^{GJR}(\alpha=1\%)$ & TCN      & 10                  & 24   & 13.9951 & 3.8414 & 0.0002 & Reject $H_0$\\
                              & RL       & 10                  & 11   & 0.0824 & 3.8414 & 0.7741 & Accept $H_0$\\
\hline
\end{tabular}
}
\label{tab7_a}
\subcaption{On validation sample}
\end{subtable}
\begin{subtable}[c]{0.5\textwidth}
\centering
\resizebox{8.5cm}{!}{
\begin{tabular}{|l|l|ccccc|l|}
\hline
Model                         & Version & Expected  & Actual  & Test  & Test  &  Test  & Decision \\
 & & violations &  violations & statistic & critical value &  p-value &  \\ 
\hline
                              & Original & 50                 & 64  & 3.5722 & 3.8414 & 0.0587 & Accept $H_0$ \\
$VaR^{GARCH}(\alpha=5\%)$     & TCN      & 50                 & 47  & 0.2467 & 3.8414 & 0.6193 & Accept $H_0$ \\
                              & RL       & 50                 & 52  & 0.0075 & 3.8414 & 0.9310 & Accept $H_0$ \\
\hline
                              & Original & 50                 & 64  & 3.5722 & 3.8414 & 0.0587 & Accept $H_0$ \\
$VaR^{GJR}(\alpha=5\%)$ & TCN      & 50                 & 52  & 0.0529 & 3.8414 & 0.8180 & Accept $H_0$  \\
                              & RL       & 50                 & 55  & 0.4297 & 3.8414 & 0.5121 & Accept $H_0$ \\
\hline
                              & Original & 10                  & 13  & 0.7828  & 3.8414  & 0.3762 & Accept $H_0$ \\
$VaR^{GARCH}(\alpha=1\%)$     & TCN      & 10                  & 13  & 0.7828  & 3.8414  & 0.3762 & Accept $H_0$ \\
                              & RL       & 10                  & 11  & 0.0823  & 3.8414  & 0.7741  & Accept $H_0$ \\
\hline
                              & Original & 10                  & 12   & 0.3481  & 3.8414 & 0.5551 & Accept $H_0$ \\
$VaR^{GJR}(\alpha=1\%)$ & TCN      & 10                  & 10   & 0.0006  & 3.8414 & 0.9797 & Accept $H_0$\\
                              & RL       & 10                  & 11   & 0.0824 & 3.8414 & 0.7741 & Accept $H_0$ \\
\hline
\end{tabular}
}
\label{tab7_b}
\subcaption{On testing sample}
\end{subtable}
\caption{Christoffersen test of VaR models}
\label{tab7}
\end{table}

\begin{table}[H]
\begin{subtable}[c]{0.5\textwidth}
\centering
\resizebox{8.5cm}{!}{
\begin{tabular}{|l|l|ccc|l|}
\hline
Model                          & Version   & Test statistic & Test critical value &  Test p-value & Decision \\ 
\hline
                               & Original  & 13.4338 & 5.9914 & 0.0012 & Reject $H_0$ \\
$VaR^{GARCH}(\alpha=5\%)$      & TCN       & 10.2577 & 5.9914 & 0.0059 & Reject $H_0$ \\
                               & RL        & 7.6298 & 5.9914  & 0.0320 & Reject $H_0$ \\
\hline
                               & Original  & 5.9509  & 5.9914 & 0.0510 & Accept $H_0$ \\
$VaR^{GJR}(\alpha=5\%)$  & TCN       & 10.2577 & 5.9914 & 0.0059 & Reject $H_0$ \\
                               & RL        & 5.9593  & 5.9914 & 0.0510 & Accept $H_0$ \\
\hline
                               & Original & 7.0584  & 5.9914 & 0.02932 & Reject $H_0$ \\
$VaR^{GARCH}(\alpha=1\%)$      & TCN      & 16.0076 & 5.9914 & 0.0003 & Reject $H_0$ \\
                               & RL       & 4.5624  & 5.9914 & 0.1021 & Accept $H_0$ \\
\hline
                               & Original & 0.9907  & 5.9914 & 0.6093 & Accept $H_0$ \\
$VaR^{GJR}(\alpha=1\%)$  & TCN      & 15.1671 & 5.9914 & 0.0005 & Reject $H_0$ \\
                               & RL       & 3.4969  & 5.9914 & 0.1740 & Accept $H_0$ \\
\hline
\end{tabular}
}
\label{tab8_a}
\subcaption{On validation sample}
\end{subtable}
\begin{subtable}[c]{0.5\textwidth}
\centering
\resizebox{8.5cm}{!}{
\begin{tabular}{|l|l|ccc|l|}
\hline
Model                          & Version   & Test statistic & Test critical value &  Test p-value & Decision \\ 
\hline
                               & Original  & 4.5021 & 5.9914  & 0.1052 & Accept $H_0$ \\
$VaR^{GARCH}(\alpha=5\%)$      & TCN       & 0.2660 & 5.9914  & 0.8754 & Accept $H_0$ \\
                               & RL        & 1.3806 & 5.9914  & 0.5014 & Accept $H_0$ \\
\hline
                               & Original  & 3.5735  & 5.9914 & 0.1675 & Accept $H_0$ \\
$VaR^{GJR}(\alpha=5\%)$  & TCN       & 0.2643  & 5.9914 & 0.8761 & Accept $H_0$ \\
                               & RL        & 2.3948  & 5.9914 & 0.3019 & Accept $H_0$ \\
\hline
                               & Original & 1.1229  & 5.9914 & 0.5703 & Accept $H_0$ \\
$VaR^{GARCH}(\alpha=1\%)$      & TCN      & 1.1229  & 5.9914 & 0.5703 & Accept $H_0$ \\
                               & RL       & 4.5624  & 5.9914 & 0.1021 & Accept $H_0$ \\
\hline
                               & Original & 0.6376  & 5.9914 & 0.7270 & Accept $H_0$ \\
$VaR^{GJR}(\alpha=1\%)$  & TCN      & 0.2012  & 5.9914 & 0.9042 & Accept $H_0$ \\
                               & RL       & 0.3253  & 5.9914 & 0.8498 & Accept $H_0$ \\
\hline
\end{tabular}
}
\label{tab8_b}
\subcaption{On testing sample}
\end{subtable}
\caption{Kupiec test of VaR models}
\label{tab8}
\end{table}
%

%
\section{Discussions }
\label{sec4}

\subsection{Calibration of Classification-Adjusted Value-at-Risk parameters}

To ensure the statistical credibility and practical relevance of the proposed Classification-Adjusted Value-at-Risk (VaR) model, we calibrate the adjustment parameters $(b_1, b_2)$ using two complementary methodologies. The first relies on a rolling-window cross-validation procedure designed to account for the temporal dependence inherent in financial time series. The second adopts a fully Bayesian perspective through Markov Chain Monte Carlo (MCMC) estimation, allowing a probabilistic characterization of parameter uncertainty.
The cross-validation approach explores a grid of $(b_1, b_2)$ combinations and evaluates their performance in terms of the number of VaR violations across validation windows. As illustrated in Figure \ref{fig:params1}, the violation surface shows a clear minimum, indicating that the pair $(b_1 = 0.30, b_2 = 0.20)$ produces the lowest number of exceedances. This empirical optimum ensures that the adjusted VaR provides sufficient coverage while preserving the independence of violations over time, as further confirmed by serial independence tests. Economically, this adjustment corresponds to reducing VaR by 30\% in periods predicted as low risk and increasing it by 20\% in high risk states. Such magnitudes reflect a well-balanced trade-off: they partially correct for potential underestimation without disproportionately relying on the predictive power of the classification model. To avoid overreactive adjustments, the grid search was limited to values below 0.5.
\begin{figure}[H]
        \centering
        \begin{subfigure}[b]{0.475\textwidth}
            \centering
            \includegraphics[width=1.12\textwidth]{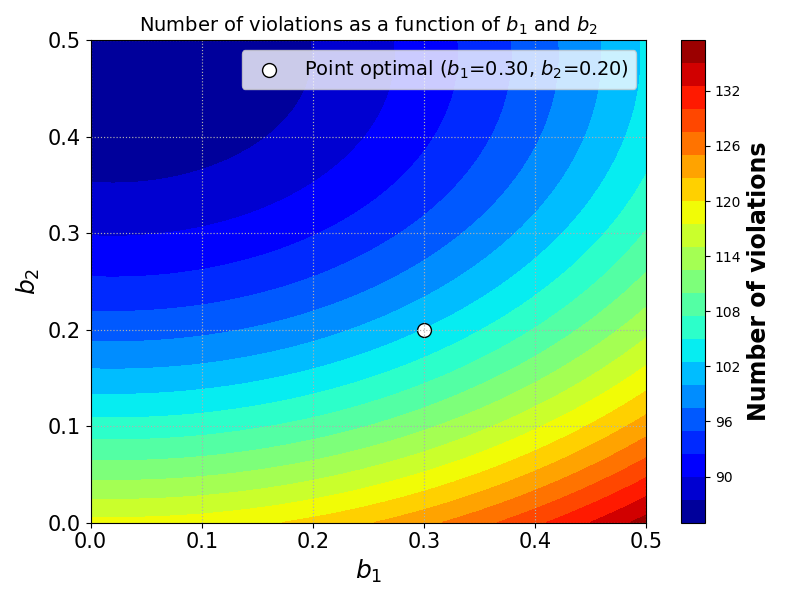}
            \caption[Network2]%
            {{\small Grid search}}    
            \label{fig:params1}
        \end{subfigure}
        \hfill
        \begin{subfigure}[b]{0.475\textwidth}  
            \centering 
            \includegraphics[width=1.1\textwidth]{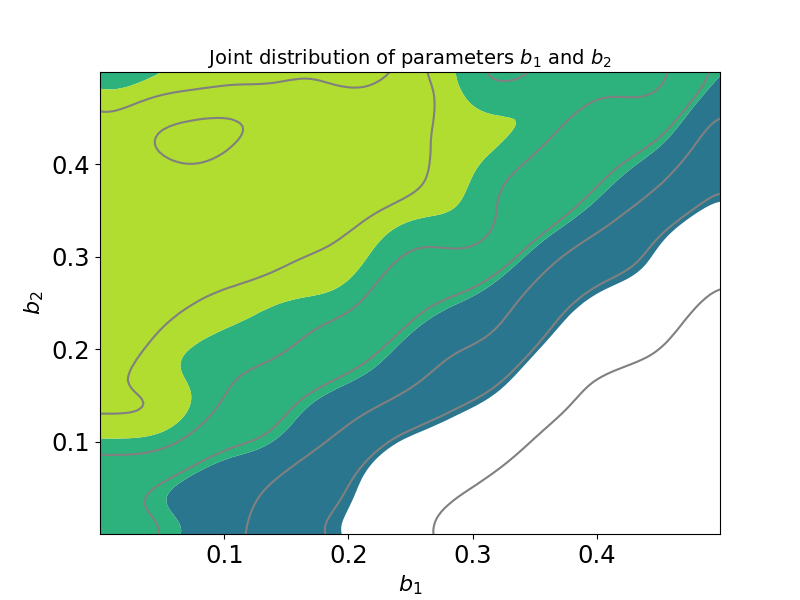}
            \caption[]%
            {{\small  Bayesian approach}}    
            \label{fig:params2}
        \end{subfigure}
        
        \caption
        {\small Calibration of parameters $b_1$ and $b_2$} 
        \label{fig9a}
\end{figure}
To further substantiate the robustness of these parameter choices, we estimate the joint posterior distribution of $(b_1, b_2)$ using a Bayesian MCMC sampling scheme. Figure \ref{fig:params2} shows the resulting posterior density, with darker regions denoting higher credibility. The empirical optimum $(0.30, 0.20)$ lies within the high-density region, reinforcing its plausibility under a statistical inference framework. 
\newpage
Interestingly, the posterior contours reveal a negative correlation between the parameters: stronger downward adjustments in low-risk periods (higher $b_1$) must be counterbalanced by more conservative upward adjustments in high-risk states (lower $b_2$) to maintain consistent coverage. This interdependence highlights the importance of calibrating the adjustment mechanism as a coherent system rather than in isolation. %
Taken together, the cross-validation and Bayesian analyses provide a rigorous and mutually reinforcing justification for the chosen adjustment parameters.\\

Although cross-validation ensures empirical performance in a realistic, time-dependent setting, Bayesian inference offers a probabilistic understanding of parameter uncertainty. This dual calibration strategy improves the credibility, transparency and robustness of the proposed VaR adjustment mechanism, providing a principled foundation for practical deployment in financial risk management.

\subsection{Robustness Analysis via Extreme Value Theory}
Beyond traditional backtesting procedures such as the Kupiec and Christoffersen tests, it is essential to examine the robustness of our Classification Adjusted Value-at-Risk (VaR) model in the tails of the return distribution. In this regard, Extreme Value Theory (EVT) offers a rigorous statistical foundation to evaluate the model's ability to capture rare but financially devastating events. According to the Pickands–Balkema–de Haan theorem, the distribution of losses exceeding a high threshold asymptotically follows a Generalized Pareto Distribution (GPD), making it a natural candidate for tail modeling in financial risk.
To operationalize this framework, we extract the exceedances over the adjusted VaR thresholds $\text{VaR}_\text{{ML}}(\alpha)$ and fit a GPD to these residual losses. The adequacy of the fitted model is assessed using the Kolmogorov–Smirnov (KS) test, where the null hypothesis posits that the observed exceedances are drawn from a GPD. A p-value exceeding 5\% implies that the null hypothesis cannot be rejected, suggesting consistency between empirical tail behavior and the fitted extreme value model.\\

Table \ref{tabEVT} reports the location of the estimated GPD parameters $\hat{\mu}$, scale $\hat{\beta}$, and shape $\hat{\xi}$ alongside the KS test statistics and p values for both validation and testing samples, across the different model configurations (Original, TCN and RL). The results reveal a systematic pattern: For all configurations, the KS p-values consistently exceed the 5\% threshold, confirming the statistical validity of the GPD fit across the board.
A particularly noteworthy observation concerns the validation sample, which chronologically aligns with the Covid-19 crisis period, a phase marked by high volatility, structural breaks, and extreme tail risk.\\

During this period, several RL-adjusted models, such as $\text{VaR}^{\text{GARCH}}_{\text{RL}}(5\%)$ and $\text{VaR}^{\text{GJR}}_{\text{RL}}(1\%)$, produced estimated shape parameters $\hat{\xi} > 1$. This is economically significant: a shape parameter above one implies an infinite second moment of the loss distribution, consistent with the breakdown of conventional risk assumptions in systemic events. Such heavy-tailed behavior reinforces the need to use conservative and adaptive risk measures capable of capturing non-linear tail dynamics.\\

From a risk management point of view, these findings provide strong empirical support for the classification-adjusted VaR framework. Not only do the chosen adjustment parameters $(b_1 = 0.30, b_2 = 0.20)$ ensure accurate coverage in the backtesting, but they also preserve consistency with EVT-based tail modeling. Dual validation in terms of unconditional violations and conditional tail distributions substantiates the robustness of our proposed approach under real-world stress conditions.
Finally, the robustness of the adjusted VaR in both validation and testing samples, as indicated by p-values ranging from 0.06 to 0.99, confirms that the model does not underestimate risk in critical periods. This is crucial for both regulatory compliance (e.g., under Basel III recommendations) and internal risk governance, where reliable tail estimation informs capital adequacy planning and strategic hedging decisions.
\newpage
\subsection{Comparative Performance of VaR Adjustments}

The integration of classification-adjusted Value-at-Risk (VaR) models, particularly those relying on reinforcement learning (RL) and temporal convolutional networks (TCN), leads to marked improvements in both statistical performance and economic efficiency relative to conventional volatility-based models. These gains are most evident when examining the model’s ability to control violation rates, adapt to changing risk regimes, optimize capital allocation, and maintain robustness in structurally unstable market environments.
%
From a statistical perspective, Reinforcement Learning–driven VaR models exhibit superior calibration properties. As evidenced by the Kupiec and Christoffersen backtest results (Tables \ref{tab7} and \ref{tab8}), the RL approach maintains coverage levels closely aligned with theoretical expectations while satisfying the independence assumption of violations, a condition systematically rejected for GARCH and GJR-GARCH models at conventional significance levels. The absence of temporal clustering in violations further underscores the improved ability of the RL framework to capture the stochastic structure of returns.
\begin{figure}[H]
        \centering
        \begin{subfigure}[b]{0.475\textwidth}
            \centering
            \includegraphics[width=1.25\textwidth]{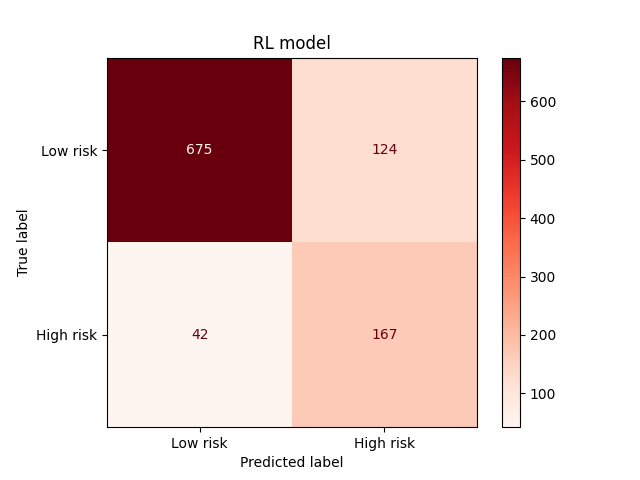}
            \caption[Network2]%
            {{\small On validation sample}}    
            \label{fig9_a}
        \end{subfigure}
        \hfill
        \begin{subfigure}[b]{0.475\textwidth}  
            \centering 
            \includegraphics[width=1.25\textwidth]{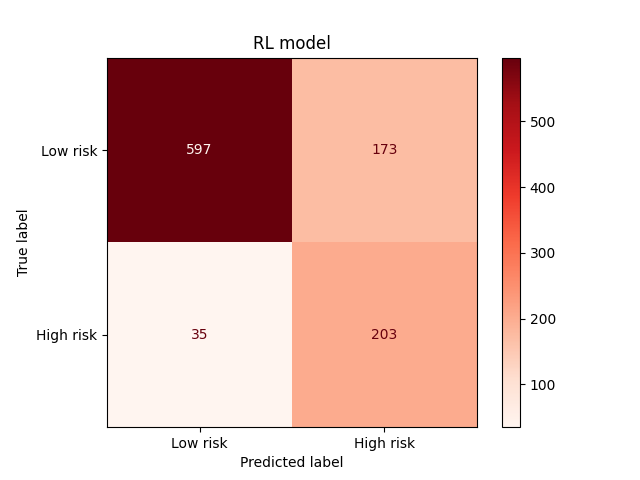}
            \caption[]%
            {{\small On testing sample}}    
            \label{fig9_b}
        \end{subfigure}
        \vskip\baselineskip
        \begin{subfigure}[b]{0.475\textwidth}   
            \centering 
            \includegraphics[width=1.25\textwidth]{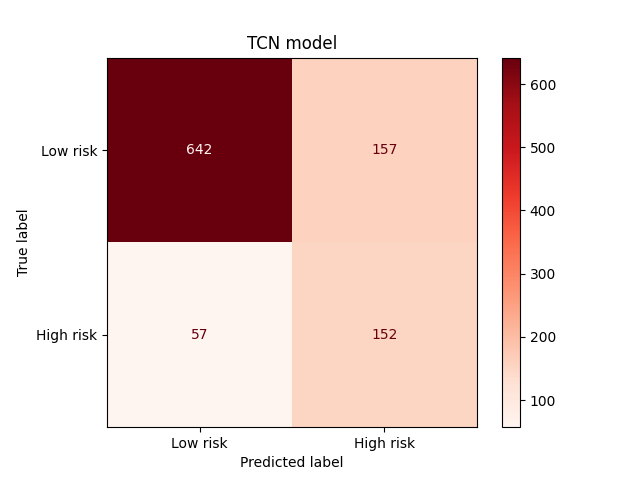}
            \caption[]%
            {{\small On validation sample}}    
            \label{fig9_c}
        \end{subfigure}
        \hfill
        \begin{subfigure}[b]{0.475\textwidth}   
            \centering 
            \includegraphics[width=1.25\textwidth]{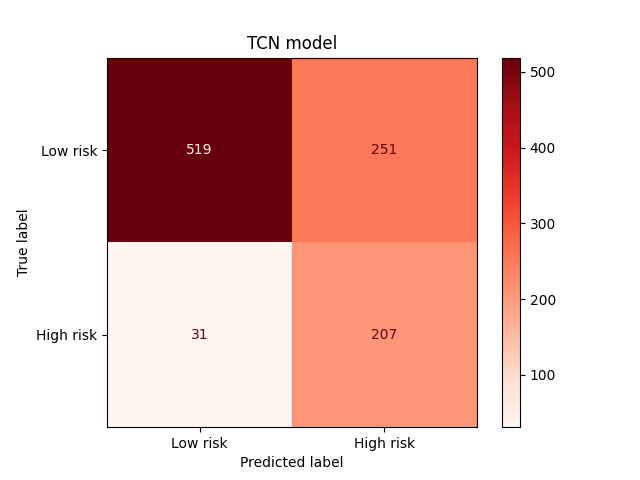}
            \caption[]%
            {{\small On testing sample}}    
            \label{fig9_d}
        \end{subfigure}
        \caption
        {\small Confusion matrices for RL and TCN models} 
        \label{fig9}
    \end{figure}
\newpage

These findings are further supported by the aggregated violation counts reported in Table \ref{tab:var-violation}, where the RL-adjusted VaR consistently reports fewer exceedances than both GARCH-type and TCN models, particularly at the 5\% risk level. The nonparametric Wilcoxon rank-sum test (Table \ref{tab10}) confirms the statistical significance of this difference, with a p-value of 0.0039 indicating robust outperformance in tail coverage. This improved statistical validity is not merely a technical refinement; it has direct consequences for portfolio protection during high-volatility episodes.\\

The source of this enhanced control lies in the accurate classification of the underlying risk states. Figures \ref{fig9} and \ref{fig10} reveal that the RL model achieves high precision in distinguishing between low-risk and high-risk regimes. In the validation and test samples, it correctly identifies 675 and 597 low-risk periods, respectively, allowing substantial downward adjustments to the VaR during tranquil market phases. In parallel, the TCN model proves more effective in detecting high-risk episodes, with 167 and 203 accurate classifications across the two samples, prompting timely increases in capital buffers. The alignment between these classifications and real-world market events, such as the COVID-19 pandemic or post-pandemic geopolitical shocks, highlights the capacity of these models to dynamically recalibrate risk estimates in response to structural breaks.\\

Beyond predictive accuracy, the economic implications of these adjustments are nontrivial. Efficient risk management requires that VaR models not only safeguard against losses but also avoid excessive capital conservatism. In this regard, Figures \ref{fig12} and \ref{fig13} illustrate that the RL and TCN models tend to produce less negative VaR estimates than GJR-GARCH, achieving similar levels of coverage with lower implicit capital requirements. This distributional shift is statistically validated by the Mann–Whitney test (Table \ref{tab11}), which shows that the RL model produces significantly higher (i.e. less conservative) VaR medians across the thresholds 1\% and 5\%, under both validation and test conditions. The ability to reduce capital charges without compromising regulatory compliance constitutes a substantial operational advantage, particularly in capital-constrained environments.

\begin{figure}[H]
        \centering
        \begin{subfigure}[b]{0.475\textwidth}
            \centering
            \includegraphics[width=\textwidth]{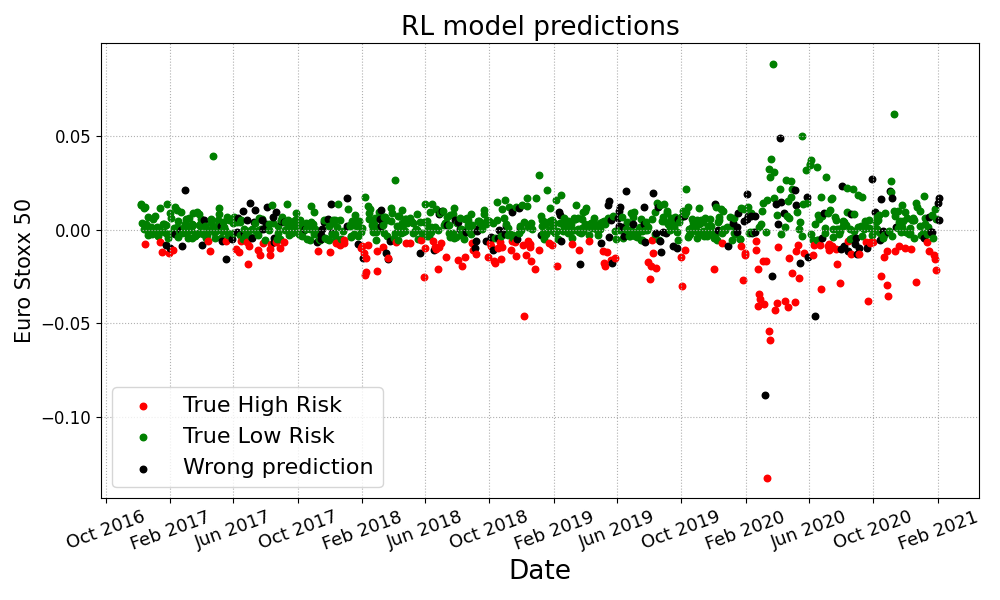}
            \caption[Network2]%
            {{\small On validation sample}}    
            \label{fig10_a}
        \end{subfigure}
        \hfill
        \begin{subfigure}[b]{0.475\textwidth}  
            \centering 
            \includegraphics[width=\textwidth]{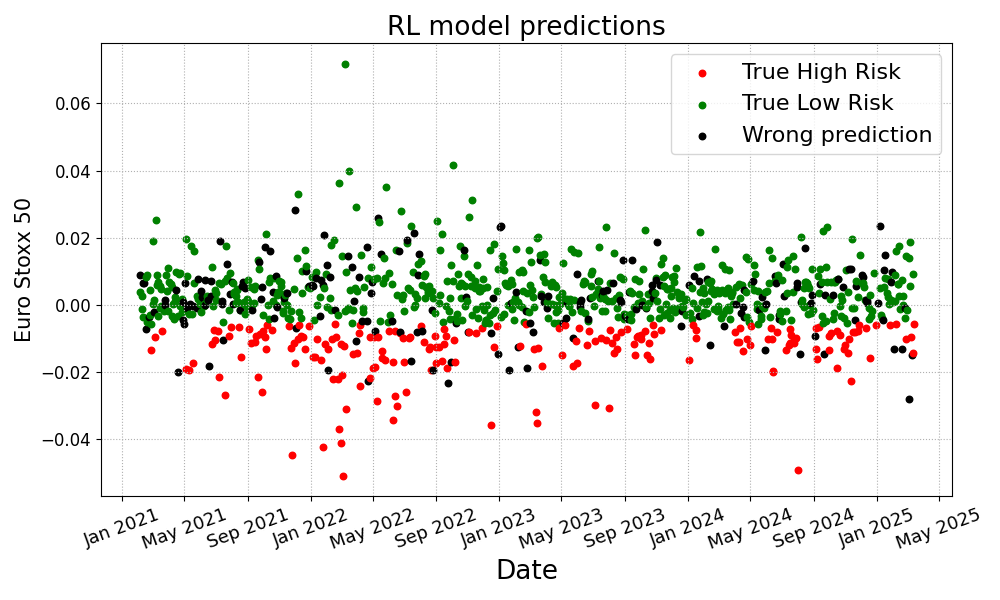}
            \caption[]%
            {{\small On testing sample}}    
            \label{fig10_b}
        \end{subfigure}
        \vskip\baselineskip
        \begin{subfigure}[b]{0.475\textwidth}   
            \centering 
            \includegraphics[width=\textwidth]{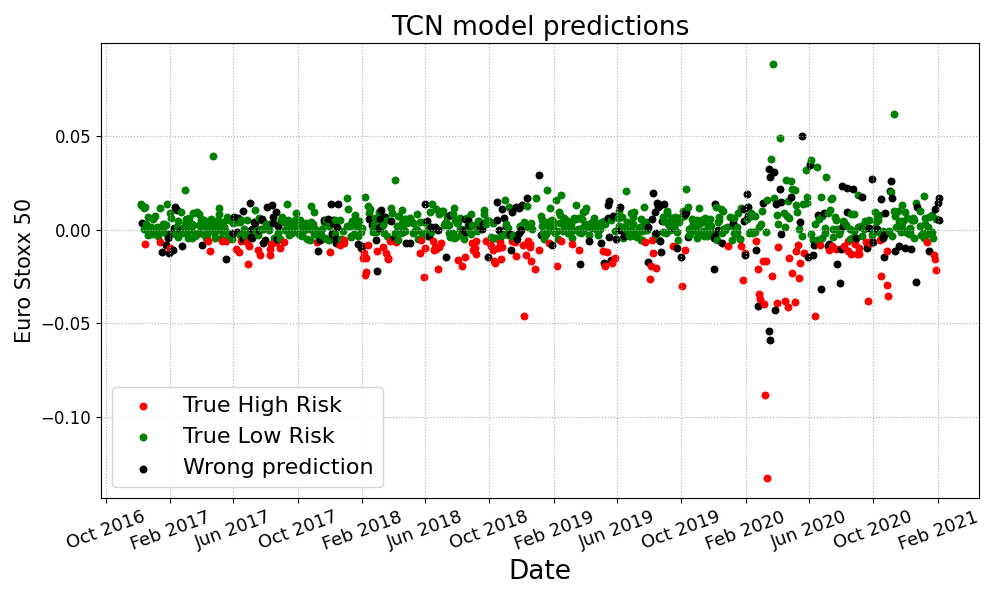}
            \caption[]%
            {{\small On validation sample}}    
            \label{fig10_c}
        \end{subfigure}
        \hfill
        \begin{subfigure}[b]{0.475\textwidth}   
            \centering 
            \includegraphics[width=\textwidth]{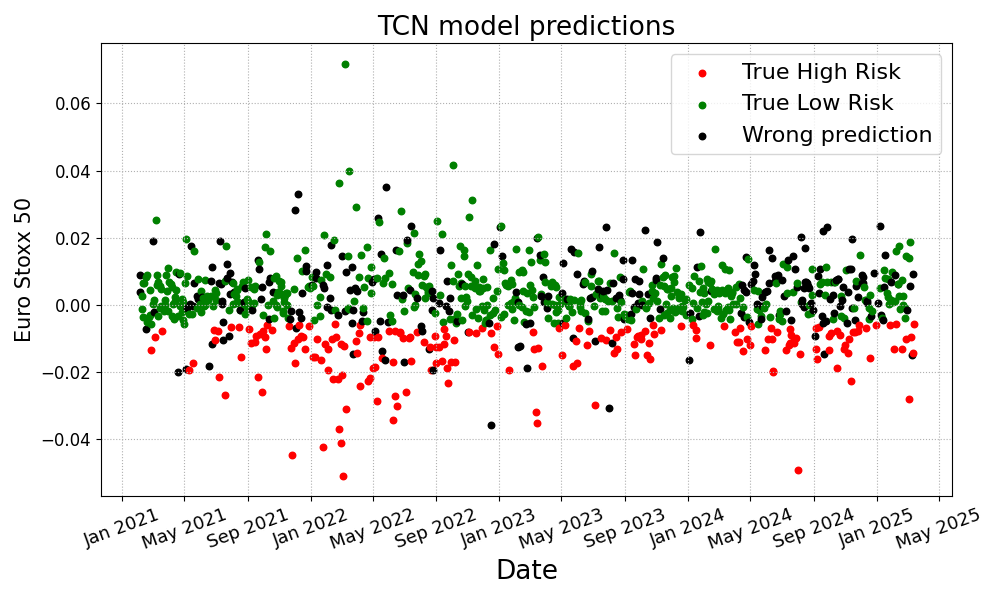}
            \caption[]%
            {{\small On testing sample}}    
            \label{fig10_d}
        \end{subfigure}
        \caption
        {\small Prediction of risk levels} 
        \label{fig10}
    \end{figure}

\newpage
\noindent
\makebox[\textwidth][c]{
    \begin{minipage}[b]{0.48\textwidth}
        \centering
        \includegraphics[width=1.1\linewidth]{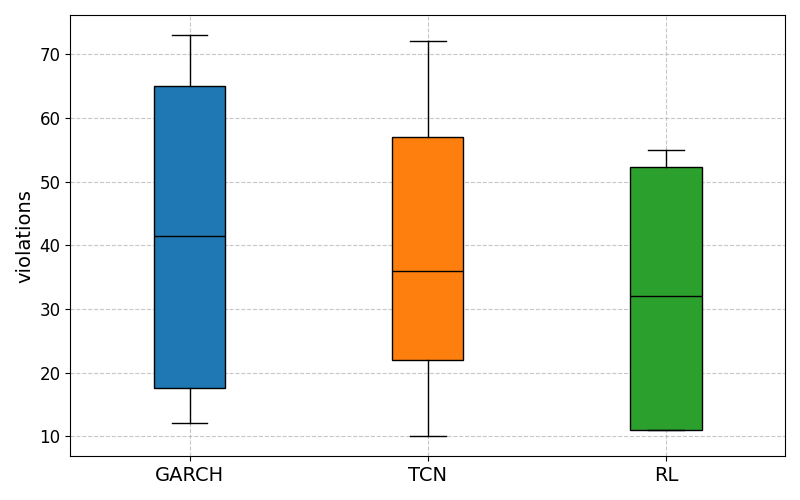} 
        \captionof{figure}{Boxplots of VaR violation}
        \label{fig:var-boxplot}
    \end{minipage}
    \hfill
\begin{minipage}[b]{0.48\textwidth}
 \centering
\resizebox{7.cm}{!}{
\begin{tabular}{|c|l|ccc|}
\hline
 \small{Sample} & \small{Model} & GARCH & TCN & RL \\ 
  \hline
\multirow{4}{*}{\begin{turn}{90} \small{validation}\ \ \end{turn}}  & 
    GARCH 95\% & 73 & 72 & 52 \\ 
  & GJR 95\% & 68 & 72 & 53 \\ 
  & GARCH 99\% & 19 & 25 & 12 \\ 
  & GJR 99\% & 13 & 13 & 11 \\ 
  \hline
\multirow{4}{*}{\begin{turn}{90} \small{testing}\ \end{turn}}  & GARCH 95\% & 64 & 47 & 52 \\ 
  & GJR 95\% & 64 & 52 & 55 \\ 
  & GARCH 99\% & 13 & 13 & 11 \\ 
  & GJR 99\% & 12 & 10 & 11 \\ 
   \hline
\end{tabular}
}
        \captionof{table}{VaR violation}
        \label{tab:var-violation}
    \end{minipage}
}

However, the performance of the TCN model is more nuanced. Although it outperforms traditional benchmarks in the test sample, it exhibits greater sensitivity during periods of extreme volatility, leading to a higher frequency of violations. This relative instability limits its effectiveness under highly turbulent conditions, where the adaptive structure of the RL model is more robust.
Importantly, these performance gains are achieved under challenging empirical conditions.\\ 

The two analyzed subsamples span prolonged episodes of financial instability, including the aftermath of Brexit, the COVID-19 pandemic, rising inflation, and geopolitical turmoil following the Russia–Ukraine conflict. Such environments violate many of the assumptions underlying traditional econometric models, particularly stationarity and linearity, and underscore the need for models capable of learning from non-standard and evolving dynamics.
In this context, the dynamic adjustment of VaR thresholds via classification-informed modulation parameterized by $b_1$ and $b_2$ emerges as a central innovation. It enables the model to scale risk estimates in real time, expanding the buffer in periods of increased uncertainty while relaxing constraints during stable market conditions. This adaptability is especially valuable for institutions that want to align capital reserves with prevailing risk profiles and report risk measures that are both responsive and explainable.
%
\begin{table}[H]
\centering
\resizebox{16cm}{!}{
\begin{tabular}{|l|l|c|c|c|}
  \hline
 Test hypotheses & Test indicators & TCN vs GARCH & RL vs GARCH & TCN vs RL \\ 
  \hline
  \multirow{2}{*}{$H_0$ : $\mu_1 = \mu_2$ \mbox{ vs } $H_1$ : $\mu_1 < \mu_2$} & Test statistic   & 12 & 0.0  & 8  \\
   & P-value   & 0.3997 & 0.0039 & 0.0976  \\
 \hline
\end{tabular}
}
\caption{Equality test for VaR violation means} 
\label{tab10} 
\end{table}
%
\begin{table}[H]
\begin{subtable}[c]{0.5\textwidth}
\centering
\resizebox{8.cm}{!}{
\begin{tabular}{l l cc cc}
\toprule
 & & \multicolumn{2}{c}{GARCH} & \multicolumn{2}{c}{GJR-GARCH} \\
\cmidrule(lr){3-4} \cmidrule(lr){5-6}
$\alpha$             & Ind stat  & TCN    & RL     & TCN     & RL \\
\midrule
\multirow{2}{*}{5\%} & statistic & 243568 & 228198 & 241408  & 235833 \\
                     & p-value   & 0.1236 & 0.0136 & 0.08212 & 0.02308 \\
\midrule
\multirow{2}{*}{1\%} & statistic & 252784 & 238663  & 241376  & 236536 \\
                     & p-value   & 0.4362 & 0.04572 & 0.08159 & 0.0436 \\
\bottomrule
\end{tabular}
}
\label{tab11_a}
\subcaption{On validation sample}
\end{subtable}
\begin{subtable}[c]{0.5\textwidth}
\centering
\resizebox{8.cm}{!}{
\begin{tabular}{l l cc cc}
\toprule
 & & \multicolumn{2}{c}{GARCH} & \multicolumn{2}{c}{GJR-GARCH} \\
\cmidrule(lr){3-4} \cmidrule(lr){5-6}
$\alpha$             & Ind stat  & TCN     & RL     & TCN     & RL \\
\midrule
\multirow{2}{*}{5\%} & statistic & 234618  & 226652 & 220530  & 235833 \\
                     & p-value   & 0.01678 & 0.0014 & 0.0001  & 0.02308 \\
\midrule
\multirow{2}{*}{1\%} & statistic & 237250  & 236431  & 238642 & 238250 \\
                     & p-value   & 0.03284 & 0.02685 & 0.0455 & 0.04159 \\
\bottomrule
\end{tabular}
}
\label{tab11_b}
\subcaption{On testing sample}
\end{subtable}
\caption{Mann-Whitney test between distributions of $\text{VaR}_{t+1}(\alpha)$ and $\text{VaR}_{ML}(\alpha)$}
\label{tab11}
\end{table}

\subsection{Regulatory Implications and Methodological Limitations}
The financial crisis of 2007–2009 revealed critical weaknesses in conventional Value-at-Risk (VaR) models, particularly their inability to capture rare but catastrophic events. This motivated a series of reforms by the Basel Committee on Banking Supervision \citep{supervision2011basel, supervision2012basel, basel2017basel}, culminating in the replacement of VaR by a more robust and coherent risk measure: the Expected Shortfall (ES).
\newpage
\noindent Unlike VaR, which only considers a quantile of the return distribution, ES evaluates the conditional expectation of losses beyond the VaR threshold. Formally, the Expected Shortfall at confidence level $\alpha$ is defined as:
$$
ES_{t+1}(\alpha) = \mathbb{E}[r_{t+1} \mid r_{t+1} < \text{VaR}_{t+1}(\alpha)]
$$
where $r_{t+1}$ denotes the future return for the next time period.\\

This makes clear that VaR remains a fundamental component in the regulatory risk architecture, as its estimation directly underpins the calculation of ES. In this context, improving the quality of VaR estimates, especially under non-normal and volatile conditions, is of both theoretical and operational importance.
Our approach contributes to this objective by proposing a hybrid architecture based on Temporal Convolutional Networks (TCN) and deep reinforcement learning (DRL), which offers enhanced flexibility in capturing nonlinear dynamics and regime shifts. These models leverage long-memory structures and adaptive learning mechanisms, making them distinct from traditional GARCH-type frameworks that rely on rigid parametric assumptions.\\

To address regulatory concerns about model transparency, we have incorporated a variable selection step based on the Boruta algorithm, which identifies statistically significant features for classification. This contributes to the interpretability of the system by clarifying the informational basis upon which the risk regime classifier operates. In addition, the risk classification step itself, implemented prior to VaR adjustment, can be interpreted as a structural decision rule separating high- and low-risk periods, supported by confusion matrices and predictive performance metrics.
\begin{figure}[H]
    \centering
    \includegraphics[width=1.\linewidth]{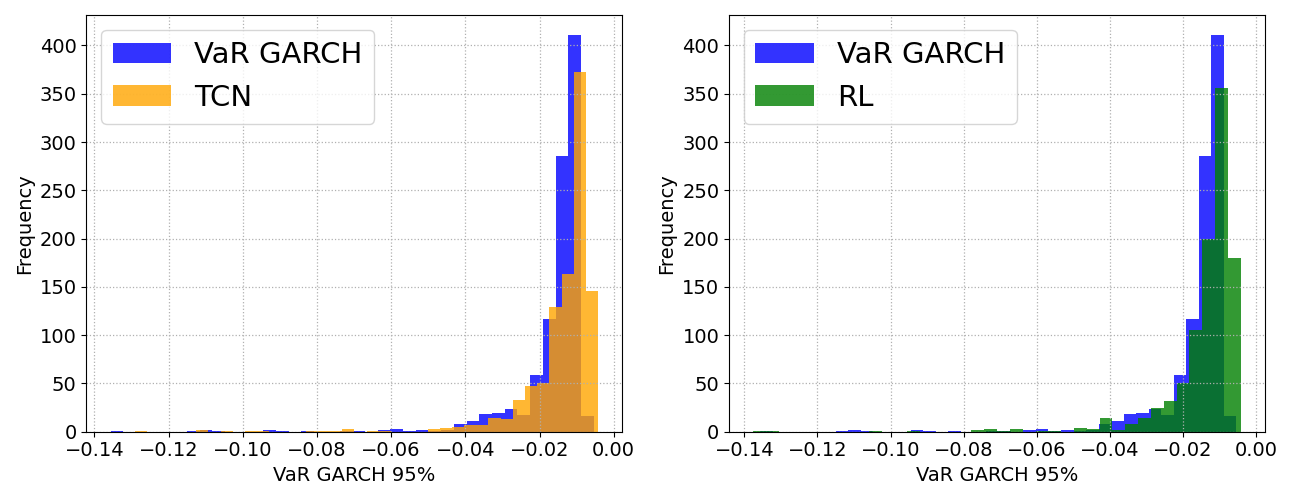}
    \caption{Comparison of VaR distributions with a GJR-GARCH volatility process on the validation sample}
    \label{fig12}
\end{figure}

However, we recognize that further interpretation of our DRL model remains a major challenge. Deep neural network decisions are inherently "black boxes," which is problematic in critical domains such as financial risk management \citep{guidotti2018survey,rudin2019stop}. To address this lack of transparency, we could explore post-hoc explanation tools such as SHAP (SHapley Additive ExPlanations) \citep{lundberg2017unified} or LIME (Local Interpretable Model-Agnostic Explanations)\citep{ribeiro2016should}. Although these techniques apply to all types of model, a particularly promising avenue would be to train a surrogate model, such as a random forest or XGBoost classifier to approximate the behavior of our DRL agent. By analyzing the decision rules of this interpretable surrogate, we could derive an intuitive representation of the underlying logic that governs the risk adjustment process.
Beyond interpretability, the practical implementation of such models poses significant computational challenges. DRL and TCN architectures require careful calibration, high-dimensional optimization, and nontrivial training infrastructure. The neural network architectures and hyperparameters are fully specified in the appendix \ref{sec::implementation}. 

\begin{figure}[H]
    \centering
    \includegraphics[width=1.\linewidth]{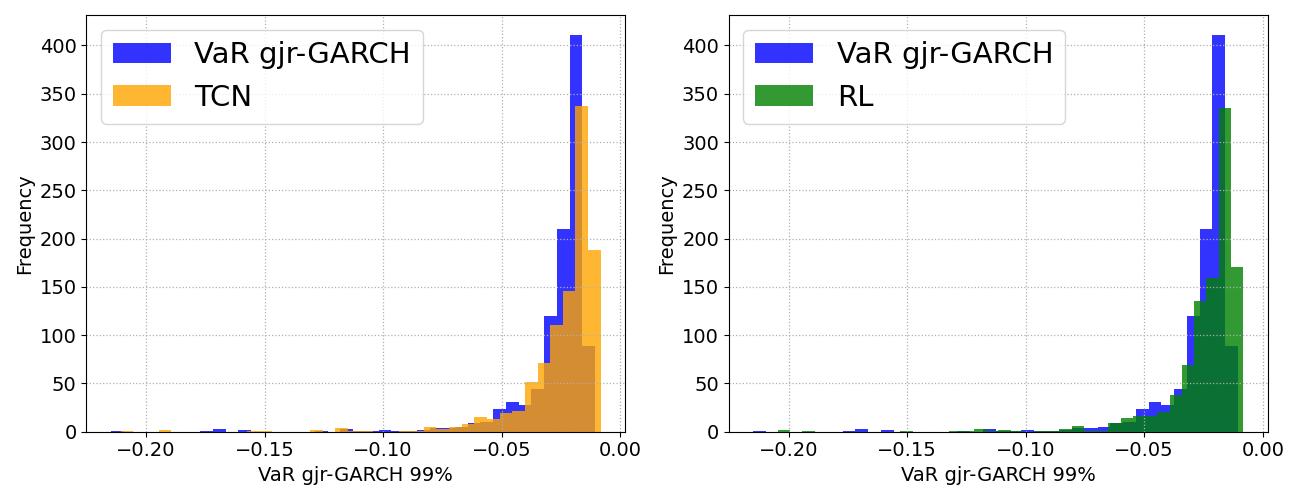}
    \caption{Comparison of VaR distributions with a GJR-GARCH volatility process on the testing sample}
    \label{fig13}
\end{figure}

These demands may hinder their adoption by institutions that lack the technical capacity or regulatory confidence to implement black-box models. However, our empirical results show that these architectures significantly reduce VaR violations while maintaining statistical validity under standard backtesting procedures, including the Kupiec and Christoffersen tests. This performance, combined with their dynamic adaptability, suggests strong potential in active capital management frameworks.
In summation, while deep learning-based approaches open promising perspectives for risk estimation and capital efficiency, their successful adoption in regulatory contexts will depend on continued efforts to enhance their interpretability, transparency, and reproducibility. This includes developing companion tools that explain the logic of risk classifications, quantifying uncertainty in model output, and establishing clear audit trails. These developments are essential to align machine learning innovations with the standards of prudential supervision and the broader expectations of financial governance.

\section{Conclusion}
\label{sec5}
This paper introduces a novel framework for Value-at-Risk (VaR) estimation that combines predictive classification with dynamic adjustment mechanisms, enabling more responsive and capital-efficient risk measurement. By reformulating the challenge of market risk forecasting into an imbalanced classification problem addressed through a Double Deep Q-Network (DDQN) reinforcement learning algorithm, our method captures non-nonlinearities, structural changes, and the asymmetric nature of financial return distributions.\\

Empirical analysis of the Euro Stoxx 50 index, covering more than 16 years of data and several episodes of market stress, demonstrates that the proposed classification-adjusted VaR model significantly outperforms traditional volatility-based approaches. Reduce both the number and temporal clustering of violations, passes standard backtesting procedures (Kupiec and Christoffersen), and aligns with the tail behavior predicted by Extreme Value Theory. From an economic standpoint, the model enables more efficient capital allocation by providing less conservative VaR estimates without compromising regulatory coverage.

For practitioners and policymakers, the following strategic takeaways emerge:
\begin{itemize}
\item Risk-level classification enhances tail risk management: Adjusting VaR thresholds according to predicted low- and high-risk periods enables closer alignment of capital buffers with prevailing market conditions.

\item Reinforcement learning enables adaptive forecasting: Unlike static models, the agent continuously updates its classification strategy in response to changing patterns in return dynamics.

\item Compliance with regulatory standards is preserved: The methodology remains compatible with the expected shortfall framework and satisfies key statistical requirements, ensuring relevance in Basel III contexts.

\item Capital efficiency and risk sensitivity can coexist: The approach reduces unnecessary capital charges during stable periods while enhancing protection during volatile episodes.
\end{itemize}
However, several limitations should be acknowledged. Its implementation involves non-trivial computational complexity, sensitivity to hyperparameters, and reduced interpretability compared to classical models. Although this limitation is mitigated through Boruta-based feature selection and structural performance evaluation through classification metrics, more work is needed to improve transparency, especially through the development of surrogate model–based explanation tools to enhance decision auditability.

\section*{Compliance with Ethical Standards}
\subsection*{Declaration of Competing Interest}
All authors declare that they have no conflicts of interest.
\subsection*{Ethical approval:}
This article does not contain any studies with human participants or animals performed by any of the authors.

\bibliography{biblio}

\begin{thebibliography}{57}
\providecommand{\natexlab}[1]{#1}
\providecommand{\url}[1]{\texttt{#1}}
\expandafter\ifx\csname urlstyle\endcsname\relax
  \providecommand{\doi}[1]{doi: #1}\else
  \providecommand{\doi}{doi: \begingroup \urlstyle{rm}\Url}\fi

\bibitem[Alostad and Davulcu(2017)]{alostad2017directional}
H.~Alostad and H.~Davulcu.
\newblock Directional prediction of stock prices using breaking news on
  twitter.
\newblock In \emph{Web Intelligence}, volume~15, pages 1--17. IOS Press, 2017.

\bibitem[Ampomah et~al.(2020)Ampomah, Qin, and Nyame]{ampomah2020evaluation}
E.~K. Ampomah, Z.~Qin, and G.~Nyame.
\newblock Evaluation of tree-based ensemble machine learning models in
  predicting stock price direction of movement.
\newblock \emph{Information}, 11\penalty0 (6):\penalty0 332, 2020.

\bibitem[Barandela et~al.(2004)Barandela, Valdovinos, S{\'a}nchez, and
  Ferri]{barandela2004imbalanced}
R.~Barandela, R.~M. Valdovinos, J.~S. S{\'a}nchez, and F.~J. Ferri.
\newblock The imbalanced training sample problem: Under or over sampling?
\newblock In \emph{Structural, Syntactic, and Statistical Pattern Recognition:
  Joint IAPR International Workshops, SSPR 2004 and SPR 2004, Lisbon, Portugal,
  August 18-20, 2004. Proceedings}, pages 806--814. Springer, 2004.

\bibitem[Bollerslev(1986)]{bollerslev1986generalized}
T.~Bollerslev.
\newblock Generalized autoregressive conditional heteroskedasticity.
\newblock \emph{Journal of econometrics}, 31\penalty0 (3):\penalty0 307--327,
  1986.

\bibitem[Chawla et~al.(2002)Chawla, Bowyer, Hall, and
  Kegelmeyer]{chawla2002smote}
N.~V. Chawla, K.~W. Bowyer, L.~O. Hall, and W.~P. Kegelmeyer.
\newblock Smote: synthetic minority over-sampling technique.
\newblock \emph{Journal of artificial intelligence research}, 16:\penalty0
  321--357, 2002.

\bibitem[Chung and Hong(2007)]{chung2007model}
J.~Chung and Y.~Hong.
\newblock Model-free evaluation of directional predictability in foreign
  exchange markets.
\newblock \emph{Journal of Applied Econometrics}, 22\penalty0 (5):\penalty0
  855--889, 2007.

\bibitem[Cui et~al.(2023)Cui, Ding, Jin, and Zhang]{cui2023portfolio}
T.~Cui, S.~Ding, H.~Jin, and Y.~Zhang.
\newblock Portfolio constructions in cryptocurrency market: A cvar-based deep
  reinforcement learning approach.
\newblock \emph{Economic Modelling}, 119:\penalty0 106078, 2023.

\bibitem[Dreiseitl and Ohno-Machado(2002)]{dreiseitl2002logistic}
S.~Dreiseitl and L.~Ohno-Machado.
\newblock Logistic regression and artificial neural network classification
  models: a methodology review.
\newblock \emph{Journal of biomedical informatics}, 35\penalty0 (5-6):\penalty0
  352--359, 2002.

\bibitem[Elhassan and Aljurf(2016)]{elhassan2016classification}
T.~Elhassan and M.~Aljurf.
\newblock Classification of imbalance data using tomek link (t-link) combined
  with random under-sampling (rus) as a data reduction method.
\newblock \emph{Global J Technol Optim S}, 1:\penalty0 2016, 2016.

\bibitem[Fawcett and Provost(1997)]{fawcett1997adaptive}
T.~Fawcett and F.~Provost.
\newblock Adaptive fraud detection.
\newblock \emph{Data mining and knowledge discovery}, 1\penalty0 (3):\penalty0
  291--316, 1997.

\bibitem[Firdous et~al.(2023)Firdous, Din, and Assad]{firdous2023imbalanced}
N.~Firdous, N.~M.~U. Din, and A.~Assad.
\newblock An imbalanced classification approach for establishment of
  cause-effect relationship between heart-failure and pulmonary embolism using
  deep reinforcement learning.
\newblock \emph{Engineering Applications of Artificial Intelligence},
  126:\penalty0 107004, 2023.

\bibitem[Gilks et~al.(1995)Gilks, Richardson, and
  Spiegelhalter]{gilks1995markov}
W.~R. Gilks, S.~Richardson, and D.~Spiegelhalter.
\newblock \emph{Markov chain Monte Carlo in practice}.
\newblock CRC press, 1995.

\bibitem[Glosten et~al.(1993)Glosten, Jagannathan, and
  Runkle]{glosten1993relation}
L.~R. Glosten, R.~Jagannathan, and D.~E. Runkle.
\newblock On the relation between the expected value and the volatility of the
  nominal excess return on stocks.
\newblock \emph{The journal of finance}, 48\penalty0 (5):\penalty0 1779--1801,
  1993.

\bibitem[Goodfellow et~al.(2016)Goodfellow, Bengio, Courville, and
  Bengio]{goodfellow2016deep}
I.~Goodfellow, Y.~Bengio, A.~Courville, and Y.~Bengio.
\newblock \emph{Deep learning}, volume~1.
\newblock MIT press Cambridge, 2016.

\bibitem[Guidotti et~al.(2018)Guidotti, Monreale, Ruggieri, Turini, Giannotti,
  and Pedreschi]{guidotti2018survey}
R.~Guidotti, A.~Monreale, S.~Ruggieri, F.~Turini, F.~Giannotti, and
  D.~Pedreschi.
\newblock A survey of methods for explaining black box models.
\newblock \emph{ACM computing surveys (CSUR)}, 51\penalty0 (5):\penalty0 1--42,
  2018.

\bibitem[Han et~al.(2005)Han, Wang, and Mao]{han2005borderline}
H.~Han, W.-Y. Wang, and B.-H. Mao.
\newblock Borderline-smote: a new over-sampling method in imbalanced data sets
  learning.
\newblock In \emph{International conference on intelligent computing}, pages
  878--887. Springer, 2005.

\bibitem[He et~al.(2008)He, Bai, Garcia, and Li]{he2008adasyn}
H.~He, Y.~Bai, E.~A. Garcia, and S.~Li.
\newblock Adasyn: Adaptive synthetic sampling approach for imbalanced learning.
\newblock In \emph{2008 IEEE international joint conference on neural networks
  (IEEE world congress on computational intelligence)}, pages 1322--1328. Ieee,
  2008.

\bibitem[Kanas(2001)]{kanas2001neural}
A.~Kanas.
\newblock Neural network linear forecasts for stock returns.
\newblock \emph{International Journal of Finance \& Economics}, 6\penalty0
  (3):\penalty0 245--254, 2001.

\bibitem[Kotsiantis et~al.(2006)Kotsiantis, Kanellopoulos, Pintelas,
  et~al.]{kotsiantis2006handling}
S.~Kotsiantis, D.~Kanellopoulos, P.~Pintelas, et~al.
\newblock Handling imbalanced datasets: A review.
\newblock \emph{GESTS international transactions on computer science and
  engineering}, 30\penalty0 (1):\penalty0 25--36, 2006.

\bibitem[Kursa and Rudnicki(2010)]{kursa2010feature}
M.~B. Kursa and W.~R. Rudnicki.
\newblock Feature selection with the boruta package.
\newblock \emph{Journal of statistical software}, 36:\penalty0 1--13, 2010.

\bibitem[Kursa et~al.(2010)Kursa, Jankowski, and Rudnicki]{kursa2010boruta}
M.~B. Kursa, A.~Jankowski, and W.~R. Rudnicki.
\newblock Boruta--a system for feature selection.
\newblock \emph{Fundamenta Informaticae}, 101\penalty0 (4):\penalty0 271--285,
  2010.

\bibitem[Lea et~al.(2016)Lea, Vidal, Reiter, and Hager]{lea2016temporal}
C.~Lea, R.~Vidal, A.~Reiter, and G.~D. Hager.
\newblock Temporal convolutional networks: A unified approach to action
  segmentation.
\newblock In \emph{Computer Vision--ECCV 2016 Workshops: Amsterdam, The
  Netherlands, October 8-10 and 15-16, 2016, Proceedings, Part III 14}, pages
  47--54. Springer, 2016.

\bibitem[Linton and Whang(2007)]{linton2007quantilogram}
O.~Linton and Y.-J. Whang.
\newblock The quantilogram: With an application to evaluating directional
  predictability.
\newblock \emph{Journal of Econometrics}, 141\penalty0 (1):\penalty0 250--282,
  2007.

\bibitem[Lundberg and Lee(2017)]{lundberg2017unified}
S.~M. Lundberg and S.-I. Lee.
\newblock A unified approach to interpreting model predictions.
\newblock \emph{Advances in neural information processing systems}, 30, 2017.

\bibitem[Mnih et~al.(2013)Mnih, Kavukcuoglu, Silver, Graves, Antonoglou,
  Wierstra, and Riedmiller]{mnih2013playing}
V.~Mnih, K.~Kavukcuoglu, D.~Silver, A.~Graves, I.~Antonoglou, D.~Wierstra, and
  M.~Riedmiller.
\newblock Playing atari with deep reinforcement learning.
\newblock \emph{arXiv preprint arXiv:1312.5602}, 2013.

\bibitem[Morimura et~al.(2012)Morimura, Sugiyama, Kashima, Hachiya, and
  Tanaka]{morimura2012parametric}
T.~Morimura, M.~Sugiyama, H.~Kashima, H.~Hachiya, and T.~Tanaka.
\newblock Parametric return density estimation for reinforcement learning.
\newblock \emph{arXiv preprint arXiv:1203.3497}, 2012.

\bibitem[Murtagh(1991)]{murtagh1991multilayer}
F.~Murtagh.
\newblock Multilayer perceptrons for classification and regression.
\newblock \emph{Neurocomputing}, 2\penalty0 (5-6):\penalty0 183--197, 1991.

\bibitem[Nevasalmi(2020)]{nevasalmi2020forecasting}
L.~Nevasalmi.
\newblock Forecasting multinomial stock returns using machine learning methods.
\newblock \emph{The Journal of Finance and Data Science}, 6:\penalty0 86--106,
  2020.

\bibitem[Noorani et~al.(2025)Noorani, Mavridis, and Baras]{noorani2025risk}
E.~Noorani, C.~N. Mavridis, and J.~S. Baras.
\newblock Risk-sensitive reinforcement learning with exponential criteria.
\newblock \emph{IEEE Transactions on Cybernetics}, 2025.

\bibitem[Nyberg(2011)]{nyberg2011forecasting}
H.~Nyberg.
\newblock Forecasting the direction of the us stock market with dynamic binary
  probit models.
\newblock \emph{International Journal of Forecasting}, 27\penalty0
  (2):\penalty0 561--578, 2011.

\bibitem[on~Banking~Supervision(2017)]{basel2017basel}
B.~C. on~Banking~Supervision.
\newblock Basel iii: Finalising post-crisis reforms.
\newblock \emph{Bank for International Settlements}, 2017.

\bibitem[Pereira et~al.(2020)Pereira, Costa, and Silla~Jr]{pereira2020mltl}
R.~M. Pereira, Y.~M. Costa, and C.~N. Silla~Jr.
\newblock Mltl: A multi-label approach for the tomek link undersampling
  algorithm.
\newblock \emph{Neurocomputing}, 383:\penalty0 95--105, 2020.

\bibitem[Pokou et~al.(2024)Pokou, Sadefo~Kamdem, and
  Benhmad]{pokou2024hybridization}
F.~Pokou, J.~Sadefo~Kamdem, and F.~Benhmad.
\newblock Hybridization of arima with learning models for forecasting of stock
  market time series.
\newblock \emph{Computational Economics}, 63\penalty0 (4):\penalty0 1349--1399,
  2024.

\bibitem[Pokou(2022)]{pokou2022contribution}
F.~V.~M. Pokou.
\newblock \emph{Une contribution sur l’allocation ou la pr{\'e}vision
  d’actifs d’un portefeuille}.
\newblock PhD thesis, Universit{\'e} de Montpellier, 2022.

\bibitem[Puterman(2014)]{puterman2014markov}
M.~L. Puterman.
\newblock \emph{Markov decision processes: discrete stochastic dynamic
  programming}.
\newblock John Wiley \& Sons, 2014.

\bibitem[Ribeiro et~al.(2016)Ribeiro, Singh, and Guestrin]{ribeiro2016should}
M.~T. Ribeiro, S.~Singh, and C.~Guestrin.
\newblock " why should i trust you?" explaining the predictions of any
  classifier.
\newblock In \emph{Proceedings of the 22nd ACM SIGKDD international conference
  on knowledge discovery and data mining}, pages 1135--1144, 2016.

\bibitem[Rosenblatt(1958)]{rosenblatt1958perceptron}
F.~Rosenblatt.
\newblock The perceptron: a probabilistic model for information storage and
  organization in the brain.
\newblock \emph{Psychological review}, 65\penalty0 (6):\penalty0 386, 1958.

\bibitem[Rudin(2019)]{rudin2019stop}
C.~Rudin.
\newblock Stop explaining black box machine learning models for high stakes
  decisions and use interpretable models instead.
\newblock \emph{Nature machine intelligence}, 1\penalty0 (5):\penalty0
  206--215, 2019.

\bibitem[Stanko and Macek(2019)]{stanko2019risk}
S.~Stanko and K.~Macek.
\newblock Risk-averse distributional reinforcement learning: A cvar
  optimization approach.
\newblock In \emph{IJCCI}, pages 412--423, 2019.

\bibitem[Sunny et~al.(2020)Sunny, Maswood, and Alharbi]{sunny2020deep}
M.~A.~I. Sunny, M.~M.~S. Maswood, and A.~G. Alharbi.
\newblock Deep learning-based stock price prediction using lstm and
  bi-directional lstm model.
\newblock In \emph{2020 2nd novel intelligent and leading emerging sciences
  conference (NILES)}, pages 87--92. IEEE, 2020.

\bibitem[Supervision(2011)]{supervision2011basel}
B.~Supervision.
\newblock Basel committee on banking supervision.
\newblock \emph{Principles for Sound Liquidity Risk Management and Supervision
  (September 2008)}, 2011.

\bibitem[Supervision(2012)]{supervision2012basel}
B.~Supervision.
\newblock Basel committee on banking supervision.
\newblock 2012.

\bibitem[Sutton and Barto(1998)]{sutton1998reinforcement}
R.~Sutton and A.~Barto.
\newblock Reinforcement learning: An introduction.
\newblock \emph{IEEE Transactions on Neural Networks}, 9\penalty0 (5):\penalty0
  1054--1054, 1998.

\bibitem[Tang et~al.(2008)Tang, Zhang, Chawla, and Krasser]{tang2008svms}
Y.~Tang, Y.-Q. Zhang, N.~V. Chawla, and S.~Krasser.
\newblock Svms modeling for highly imbalanced classification.
\newblock \emph{IEEE Transactions on Systems, Man, and Cybernetics, Part B
  (Cybernetics)}, 39\penalty0 (1):\penalty0 281--288, 2008.

\bibitem[Tealab(2018)]{tealab2018time}
A.~Tealab.
\newblock Time series forecasting using artificial neural networks
  methodologies: A systematic review.
\newblock \emph{Future Computing and Informatics Journal}, 3\penalty0
  (2):\penalty0 334--340, 2018.

\bibitem[Tomek(1976)]{tomek1976two}
I.~Tomek.
\newblock Two modifications of cnn.
\newblock 1976.

\bibitem[Van~Hasselt et~al.(2016)Van~Hasselt, Guez, and Silver]{van2016deep}
H.~Van~Hasselt, A.~Guez, and D.~Silver.
\newblock Deep reinforcement learning with double q-learning.
\newblock In \emph{Proceedings of the AAAI conference on artificial
  intelligence}, volume~30, 2016.

\bibitem[Vapnik(1999)]{vapnik1999nature}
V.~Vapnik.
\newblock \emph{The nature of statistical learning theory}.
\newblock Springer science \& business media, 1999.

\bibitem[Watkins(1989)]{watkins1989learning}
C.~J. C.~H. Watkins.
\newblock Learning from delayed rewards.
\newblock 1989.

\bibitem[White~III and White(1989)]{white1989markov}
C.~C. White~III and D.~J. White.
\newblock Markov decision processes.
\newblock \emph{European Journal of Operational Research}, 39\penalty0
  (1):\penalty0 1--16, 1989.

\bibitem[Wilson(2007)]{wilson2007asymptotic}
D.~L. Wilson.
\newblock Asymptotic properties of nearest neighbor rules using edited data.
\newblock \emph{IEEE Transactions on Systems, Man, and Cybernetics}, \penalty0
  (3):\penalty0 408--421, 2007.

\bibitem[Yang et~al.(2020)Yang, Liu, Zhong, and Walid]{yang2020deep}
H.~Yang, X.-Y. Liu, S.~Zhong, and A.~Walid.
\newblock Deep reinforcement learning for automated stock trading: An ensemble
  strategy.
\newblock In \emph{Proceedings of the first ACM international conference on AI
  in finance}, pages 1--8, 2020.

\bibitem[Ying et~al.(2022)Ying, Zhou, Su, Yan, Chen, and Zhu]{ying2022towards}
C.~Ying, X.~Zhou, H.~Su, D.~Yan, N.~Chen, and J.~Zhu.
\newblock Towards safe reinforcement learning via constraining conditional
  value-at-risk.
\newblock \emph{arXiv preprint arXiv:2206.04436}, 2022.

\bibitem[Zeng et~al.(2016)Zeng, Zou, Wei, Liu, and Wang]{zeng2016effective}
M.~Zeng, B.~Zou, F.~Wei, X.~Liu, and L.~Wang.
\newblock Effective prediction of three common diseases by combining smote with
  tomek links technique for imbalanced medical data.
\newblock In \emph{2016 IEEE International Conference of Online Analysis and
  Computing Science (ICOACS)}, pages 225--228. IEEE, 2016.

\bibitem[Zhang(2003)]{zhang2003time}
G.~P. Zhang.
\newblock Time series forecasting using a hybrid arima and neural network
  model.
\newblock \emph{Neurocomputing}, 50:\penalty0 159--175, 2003.

\bibitem[Zhang and Li(2014)]{zhang2014rwo}
H.~Zhang and M.~Li.
\newblock Rwo-sampling: A random walk over-sampling approach to imbalanced data
  classification.
\newblock \emph{Information Fusion}, 20:\penalty0 99--116, 2014.

\bibitem[Zhang et~al.(2024)Zhang, Leng, Ma, Liu, Wang, Liang, Liu, and
  Yang]{zhang2024cvar}
Q.~Zhang, S.~Leng, X.~Ma, Q.~Liu, X.~Wang, B.~Liang, Y.~Liu, and J.~Yang.
\newblock Cvar-constrained policy optimization for safe reinforcement learning.
\newblock \emph{IEEE transactions on neural networks and learning systems},
  36\penalty0 (1):\penalty0 830--841, 2024.

\end{thebibliography}
\newpage

\setcounter{figure}{0}
\renewcommand{\thefigure}{\Alph{section}\arabic{figure}}

\setcounter{table}{0}
\renewcommand{\thetable}{\Alph{section}\arabic{table}}

\begin{appendix}
\section{Selected features}

\begin{table}[H]
\centering
\resizebox{14.5cm}{!}{
\begin{tabular}{|l|l|l|l|} 
\hline
n\textsuperscript{\underline{o}}     & Asset         & ID         & Informations \\
\hline
  &   &   &  \\
1  &  AEX                                             & $^{\wedge}$AEX         & Index  \\
2  &  CAC 40                                          & $^{\wedge}$FCHI        & Index  \\
3  &  DAX                                             & $^{\wedge}$GDAXI       & Index  \\
4  &  Euro Stoxx 50                                   & $^{\wedge}$STOXX50E     & Index  \\
  &   &   &  \\
\hline
  &   &   &  \\
5 & Brent Crude Oil Last Day                          & BZ=F         & Commodity \\
6 & EURO/USD                                          & EURUSD=X     & Currency \\
7 & EURO/GBP                                          & EURGBP=X     & Currency \\
  &   &   &  \\
\hline
  &   &   &  \\
8 & SPDR EURO STOXX 50 ETF                            & FEZ          & ETF  \\
  &   &   &  \\
\hline
  &   &   &  \\
9 & Bollinger Bands                                & BB upper \& BB lower   & Technical trading indicator \\
10 & Relative Strength Index                       & RSI 14                 & Technical trading indicator  \\
11 & Standard Moving Average                       & SMA 5 \& SMA 15        & Technical trading indicator  \\
12 & Exponentially Weighted Moving Average         & EMA 5 \& EMA 15        & Technical trading indicator  \\
  &   &   &  \\
\hline
  &   &   &  \\
13 & Forecasting returns using GARCH model         & ${\wedge}$STOXX50E mu GARCH         & Econometric indicators  \\
14 & Forecasting volatility using GARCH model      & ${\wedge}$STOXX50E sig GARCH    & Econometric indicators  \\
15 & Forecasting volatility using GJR6GARCH model  & ${\wedge}$STOXX50E sig gjr-GARCH    & Econometric indicators  \\
16 & Forecasting VaR using GARCH model             & $^{\wedge}$STOXX50E VaR GARCH       & Econometric indicators  \\
17 & Forecasting VaR using GJR-GARCH model         & $^{\wedge}$STOXX50E VaR gjr-GARCH   & Econometric indicators  \\
  &   &   &  \\
\hline
  &   &   &  \\
18 & SMA strategy trading positions                & Signal 1         & Technical trading indicator  \\
19 & EMA strategy trading positions                & Signal 2         & Technical trading indicator  \\
20 & BB strategy trading positions                 & Signal 3         & Technical trading indicator  \\
  &   &   &  \\
\hline
  &   &   &  \\
21 & Forecasting risk level of returns with ARIMA model      & ARIMA indicator    & Class prediction  \\
  &   &   &  \\
\hline
\end{tabular}
}
\caption{Selected features}
\label{tabB1}
\end{table}

\newpage
\section{Confusion matrices}
\begin{figure}[h]
        \centering
        \begin{subfigure}[b]{0.475\textwidth}
            \centering
            \includegraphics[width=1.15\textwidth]{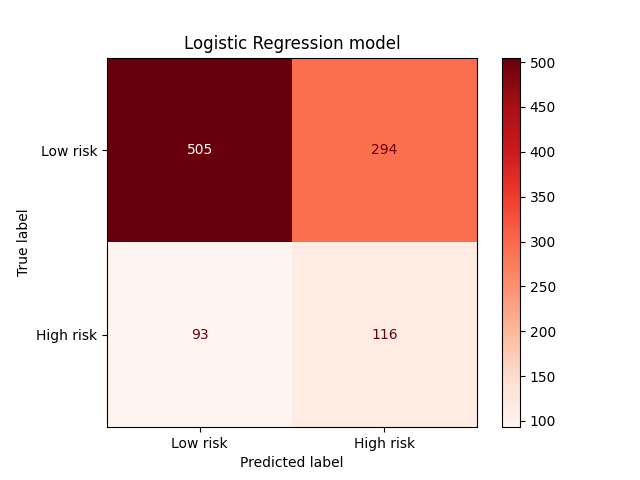}
            \caption[Network2]%
            {{\small On validation sample}}    
            \label{figA1_a}
        \end{subfigure}
        \hfill
        \begin{subfigure}[b]{0.475\textwidth}  
            \centering 
            \includegraphics[width=1.15\textwidth]{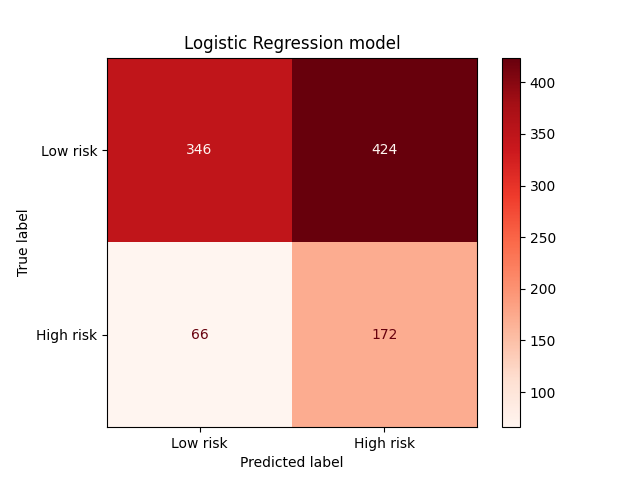}
            \caption[]%
            {{\small On testing sample}}    
            \label{figA1_b}
        \end{subfigure}
        \vskip\baselineskip
        \begin{subfigure}[b]{0.475\textwidth}   
            \centering 
            \includegraphics[width=1.15\textwidth]{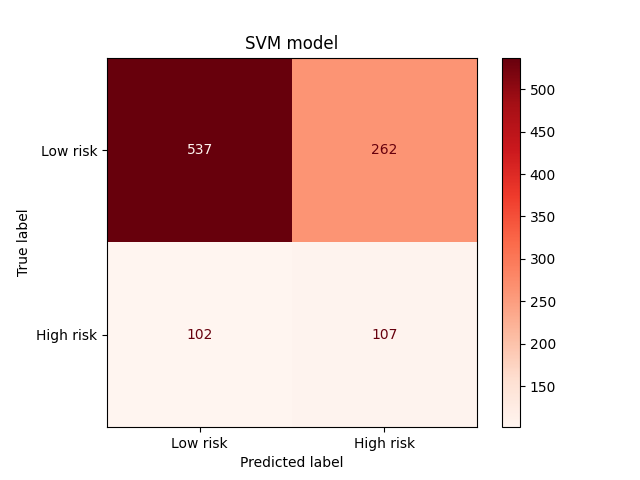}
            \caption[]%
            {{\small On validation sample}}    
            \label{figA1_c}
        \end{subfigure}
        \hfill
        \begin{subfigure}[b]{0.475\textwidth}   
            \centering 
            \includegraphics[width=1.15\textwidth]{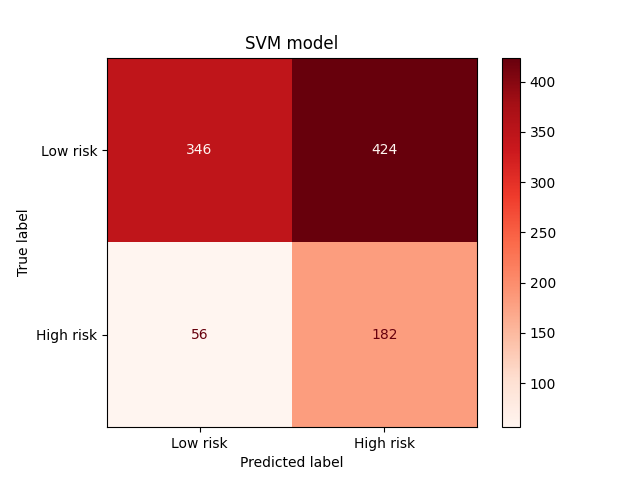}
            \caption[]%
            {{\small On testing sample}}    
            \label{figA1_d}
        \end{subfigure}
        \caption
        {\small Confusion matrices for learning models} 
        \label{figA1}
    \end{figure}
\begin{figure}[H]
        \centering
        \begin{subfigure}[b]{0.475\textwidth}   
            \centering 
            \includegraphics[width=1.15\textwidth]{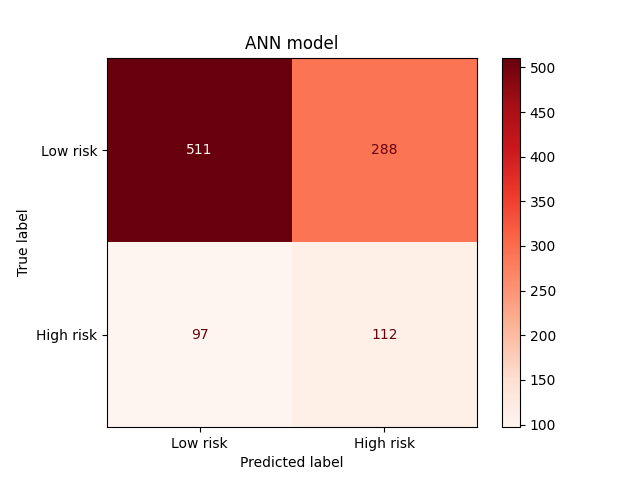}
            \caption[]%
            {{\small On validation sample}}    
            \label{figA1_e}
        \end{subfigure}
        \hfill
        \begin{subfigure}[b]{0.475\textwidth}   
            \centering 
            \includegraphics[width=1.15\textwidth]{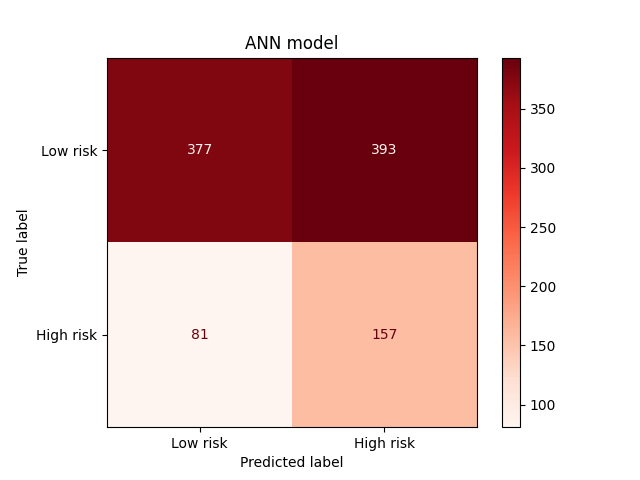}
            \caption[]%
            {{\small On testing sample}}    
            \label{figA1_f}
        \end{subfigure}
        \vskip\baselineskip
        \begin{subfigure}[b]{0.475\textwidth}   
            \centering 
            \includegraphics[width=1.15\textwidth]{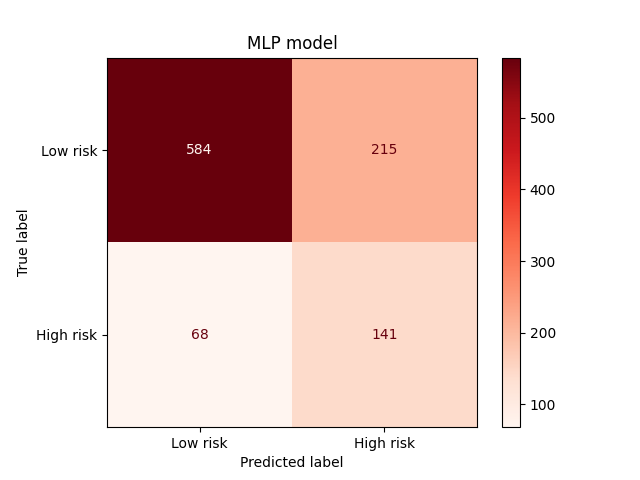}
            \caption[]%
            {{\small On validation sample}}    
            \label{figA1_g}
        \end{subfigure}
        \hfill
        \begin{subfigure}[b]{0.475\textwidth}   
            \centering 
            \includegraphics[width=1.15\textwidth]{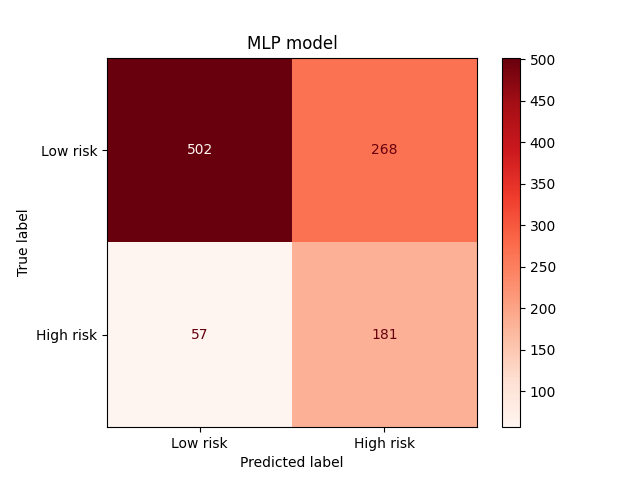}
            \caption[]%
            {{\small On testing sample}}    
            \label{figA1_h}
        \end{subfigure}

        \caption
        {\small Confusion matrices for learning models} 
        \label{figA2}
    \end{figure}
%
%
%
\setcounter{table}{0}
\renewcommand{\thetable}{\Alph{section}\arabic{table}}
\section{VaR robustness tests}

\begin{table}[H]
\begin{subtable}[c]{0.5\textwidth}
\centering
\resizebox{8.5cm}{!}{
\begin{tabular}{|l|l|c|c|c|c|c|}
\hline

\multirow{2}{4em}{Model}               & \multirow{2}{4em}{Version} & \multicolumn{3}{c|}{ GPD parameters} & \multicolumn{2}{c|}{Kolmogorov-Smirnov test}\\
     \cline{3-7}             &      &  $\hat{\mu}$ & $\hat{\beta}$ & $\hat{\xi}$& stat & p-value \\
                         \hline
                           & Original &  0.0000    &  0.0067     & 0.2474   & 0.0973  & 0.4647 \\
$VaR^{GARCH}(\alpha=5\%)$  & TCN      &  0.0001    &  0.0035     & 1.4108   & 0.1555  & 0.0669  \\
                           &  RL      &  0.0002    &  0.0025     & 1.4020   & 0.1553  & 0.0641       \\

\hline
                           & Original &  0.0002  &  0.0044   &  1.3963  &  0.1847    & 0.0167 \\
$VaR^{GJR}(\alpha=5\%)$    & TCN      &  0.0001  &  0.0047   &  0.4501  &  0.0677    & 0.8690 \\
                           &  RL      &  0.0001  &  0.0047   & 0.3549   &  0.0878    & 0.6394 \\

\hline
                           & Original &   0.0000    &  0.0059      &  0.5109  & 0.1404  & 0.7988   \\
$VaR^{GARCH}(\alpha=1\%)$  & TCN      &   0.0006    &  0.0025      &  1.3735  & 0.1343  & 0.7082   \\
                           &  RL      &   0.0006    &  0.0032      &  1.3734  & 0.1736  & 0.6234 \\

\hline
                           & Original &   0.0002   &  0.0034  &  1.3992  &  0.1464  & 0.7577  \\
$VaR^{GJR}(\alpha=1\%)$    & TCN      &   0.0009   &  0.0025  &  1.4937  &  0.1554  & 0.5559  \\
                           &  RL      &   0.0009   &  0.0024  &  1.2674  &  0.1314  & 0.9120  \\

\hline

\end{tabular}
}
\label{tabEVT_a}
\subcaption{On validation sample}
\end{subtable}
\begin{subtable}[c]{0.5\textwidth}
\centering
\resizebox{8.5cm}{!}{
\begin{tabular}{|l|l|c|c|c|c|c|}
\hline

\multirow{2}{4em}{Model}               & \multirow{2}{4em}{Version} & \multicolumn{3}{c|}{ GPD parameters} & \multicolumn{2}{c|}{Kolmogorov-Smirnov test}\\
     \cline{3-7}             &      &  $\hat{\mu}$ & $\hat{\beta}$ & $\hat{\xi}$& stat & p-value \\
                         \hline
                           & Original &  0.0000  &  0.0049  &  0.4709  &  0.0719  &  0.8714 \\
$VaR^{GARCH}(\alpha=5\%)$  & TCN      &  0.0002  &  0.0040  &  1.4046  &  0.1627  &  0.1484  \\
                           &  RL      &  0.0005  &  0.0043  &  1.2900  &  0.1781  &  0.0691   \\

\hline
                           & Original &   0.0000   &  0.0047  &  0.3958  &  0.0458  &  0.9984   \\
$VaR^{GJR}(\alpha=5\%)$    & TCN      &   0.0001   &  0.0046  &  0.3762  &  0.1020  &  0.6144   \\
                           &  RL      &   0.0006   &  0.0029  &  1.3873  &  0.1503  &  0.1505   \\

\hline
                           & Original &   0.0004   &   0.0033   &   1.2981   &   0.1658   &   0.8120  \\
$VaR^{GARCH}(\alpha=1\%)$  & TCN      &   0.0001   &   0.0060   &   0.4435   &   0.1412   &  0.9258 \\
                           &  RL      &   0.0001   &   0.0056   &   0.4367   &   0.1378   &  0.9668 \\

\hline
                           & Original &   0.0001   &  0.0047  &  1.4028   &  0.2144  &  0.5678 \\
$VaR^{GJR}(\alpha=1\%)$    & TCN      &   0.0008   &  0.0260  &  -1.2738  &  0.2122  &  0.6844 \\
                           &  RL      &   0.0013   &  0.0033  &  1.3297   &  0.2262  &  0.5527 \\

\hline

\end{tabular}
}
\label{tabEVT_a}
\subcaption{On testing sample}
\end{subtable}
\caption{VaR robustness tests}
\label{tabEVT}
\end{table}

\newpage

\setcounter{figure}{0}
\renewcommand{\thefigure}{\Alph{section}\arabic{figure}}

\setcounter{table}{0}
\renewcommand{\thetable}{\Alph{section}\arabic{table}}
\section{Implementation Details}
\label{sec::implementation}

This section summarizes the experimental setup for the models under study Logistic Regression (LR), Support Vector Machine (SVM), Artificial Neural Network (ANN), Multi-Layer Perceptron (MLP), Temporal Convolutional Network (TCN), and the Double Deep Q-Network (DDQN) in a reinforcement learning framework. Computations were performed on a high-performance workstation (AMD Ryzen 9 5900HX, 32 GB RAM, NVIDIA GeForce RTX 3060 GPU) using Python 3.12.8 with standard scientific libraries (NumPy, pandas, scikit-learn, PyTorch, pymc, imbens).\\

The data set, which covers September 1, 2008 to March 13, 2025, is based on the Euro Stoxx 50 index and related market indicators; a complete variable inventory with descriptive statistics appears in Table \ref{tabB1}. The features selected using the Boruta algorithm were split chronologically into training, validation and test sets to avoid anticipation bias. Given substantial scale differences (Table \ref{tab2}), all predictors were rescaled to $[0,1]$ using Min–Max normalization.

\subsection{General Preprocessing and Training Protocol}
The dataset was chronologically split into training, validation, and test subsets to preserve temporal dependence and avoid anticipation bias. Class imbalance was addressed with the ADASYN algorithm, which synthesizes minority-class observations in underrepresented regions of the feature space, thereby improving high-risk state detection. All predictors were scaled to the [0,1] range using Min–Max normalization to enhance numerical stability and accelerate convergence in gradient-based models. Model selection relied on accuracy, F1-score, recall, precision, and G-mean, with early stopping based on validation performance to prevent overfitting.

\subsubsection{About Figure \ref{fig1}}

The parameters for the ARIMA, ARIMA-GARCH, SVM and MLP models were selected through a combination of information criterion minimization (AIC / BIC) and grid search in the validation set. For time series models, ARIMA orders capture the optimal trade-off between model parsimony and predictive fit, while volatility dynamics is incorporated through the GARCH component. For machine learning models, kernel and regularization parameters for SVM and architectural depth and activation functions for MLP were tuned to maximize out-of-sample classification accuracy.
\begin{table}[H]
    \centering
\resizebox{8.5cm}{!}{
\begin{tabular}{|l|c|c|}
\hline
Model   &   Parameters   & Value \\ 
\hline
ARIMA       &  (p, d, q)                    & (1, 0, 2)\\  
ARIMA–GARCH &  (p, d, q)–(P, Q)             & (1, 0, 2)–(1, 1) \\
SVM         &  Kernel / C / $\gamma$        & RBF / 5.82 / 0.023  \\
MLP (Dense) & Layers / Neurons / Activation & 2 / (32, 16) / ReLU \\
\hline
\end{tabular}
}
 \caption{The implementation details of Figure \ref{fig1}}
 \label{tabf1}
\end{table}

\subsubsection{Specification of Model Structures and Training Settings}
This subsection reports the architectural configurations and hyperparameter settings for all models under consideration, including baseline classifiers, neural networks, temporal convolutional architectures, and the reinforcement learning framework.

\begin{table}[H]
\centering
\resizebox{18.cm}{!}{
\begin{tabular}{|l| l| c| l|}
\hline
\textbf{Model} & \textbf{Parameters} & \textbf{ADASYN } & \textbf{Configuration} \\
\midrule
Logistic Regression & Regularization & Yes & L1 penalty, $l_1$\_ratio=0.36, $C = 3.29$ \\
Support Vector Machine (SVM) & Kernel / $C$ / $\gamma$ & Yes & RBF kernel, $C = 10.426$, $\gamma=$'\textit{adaptive}' \\
Single-Layer Perceptron (ANN) & Hidden units / Activation / Optimizer & Yes & 41 neurons, ReLU, Adam (lr = 0.00058) \\
Multi-Layer Perceptron (MLP) & Layers / Neurons / Activation / Dropout & Yes & 2 layers (41, 35), ReLU, dropout = 0.2, Adam (lr = 0.0009) \\
Temporal Convolutional Network (TCN) & Kernel size / Filters / Dropout & Yes & Kernel = 3, 64 filters, dropout = 0.1, Adam (lr = 0.0005) \\
Double Deep Q-Network (DDQN) & Network size / Discount $\gamma$ / $\epsilon$-decay & No & 2 layers (96, 64), $\gamma = 0.95$, $\epsilon$-decay = 0.995, Adam (lr = 0.0005) \\
\bottomrule
\end{tabular}
}
\caption{Summary of Model Architectures, Training Settings, and Hyperparameters}
\label{tab:model_params}
\end{table}
\newpage

Although the complete hyperparameter optimization process implemented through Bayesian optimization required approximately 71 hours, this duration reflects a one-time calibration effort rather than a recurring operational cost. In reinforcement learning frameworks, such search procedures necessarily involve repeated model training for each candidate parameter set, which artificially inflates the reported computation time if it is not distinguished from standard training. Once optimal hyperparameters are identified, the computational burden drops substantially, with average single-run training times ranging from just a few seconds for simpler econometric or machine learning models to several minutes for deep-reinforcement learning architectures. Importantly, inference times across all models remain negligible, well below one-tenth of a second, ensuring that all approaches, including the proposed DDQN, are suitable for real-time deployment in risk monitoring systems. This separation between calibration cost and operational efficiency is essential for a fair evaluation of the practicality of the proposed methodology, particularly in institutional contexts where training is infrequent but inference is continuous.\\

Figure \ref{fig:train} reports these operational training and inference times for all benchmark models alongside the proposed DDQN framework. Training times vary considerably from one model to another, ranging from just 5 seconds for LR to 481 seconds for DDQN, reflecting the increased computational complexity of deep learning-based approaches. In contrast, inference times remain negligible for all models (less than 0.1 seconds), ensuring their practical applicability in real-time risk monitoring. DDQN, although more computationally demanding for training, maintains inference efficiency comparable to simpler models, highlighting its feasibility for operational deployment once trained.

\begin{figure}[H]
    \centering
    \includegraphics[width=0.65\linewidth]{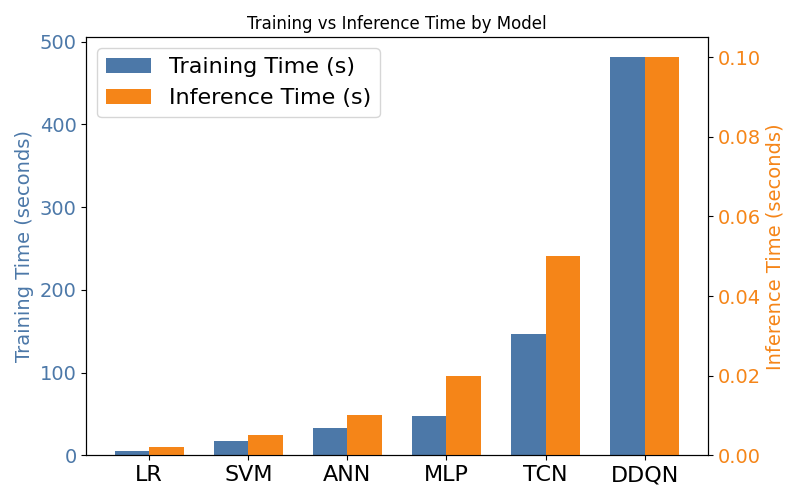}
    \caption{Average operational training times (blue bars, left Y-axis) and inference times (orange bars, right Y-axis) for all benchmark models and the proposed Double Deep Q-Network (DDQN). Reported training times exclude the one-time hyperparameter search phase ($\approx$ 71 hours via Bayesian optimization) to reflect operational deployment conditions. Inference times remain below 0.1 seconds for all models, underscoring their suitability for real-time Value-at-Risk monitoring.}
    \label{fig:train}
\end{figure}

\end{appendix}

\end{document}